\newcommand{\numberofauthors}[1]{} \newcommand{\alignauthor}{\and} \newcommand{\affaddr}[1]{#1} \newcommand{\email}[1]{\href{mailto:#1}{\texttt{#1}}} \newcommand{\balancecolumns}{} \newcommand{\proofword}{Proof} \newcommand{\stdvers}[1]{#1} \newcommand{\jrnvers}[1]{} \usepackage[margin=1.25 in]{geometry} \usepackage{hyperref} 
\newcommand{\abs}[1]{\ensuremath{\left| #1 \right|}}
\newcommand{\cov}{\ensuremath{\mathrm{Cov}}}
\newcommand{\cum}{\ensuremath{\mathrm{Cum}}}
\newcommand{\diag}{\mathrm{diag}}
\newcommand{\dist}{\mathrm{dist}}
\newcommand{\kmax}{\kappa_{\max}}
\newcommand{\kmin}{\kappa_{\min}}
\newcommand{\nCr}[2]{{{#1}\choose{#2}}}
\newcommand{\norm}[1]{\ensuremath{\lVert #1 \rVert}}
\newcommand{\norms}[1]{{\lVert#1\rVert}^2}
\newcommand{\var}{\ensuremath{\mathrm{Var}}}
\renewcommand{\vec}[1]{\ensuremath{\mathbf{#1}}}
\newcommand{\gvec}[1]{\ensuremath{\boldsymbol{#1}}}
\newcommand{\E}{\ensuremath{\mathbb{E}}}
\newcommand{\R}{\ensuremath{\mathbb{R}}}
\newcommand{\expectation}{\operatorname{\mathbb{E}}}
\newcommand{\e}{\expectation}
\newcommand{\eps}{\epsilon}
\newcommand{\deletion}[2]{}
\newcommand{\insertion}[2]{#2}
\newcommand{\replacement}[3]{#3}
\newtheorem{thm}{Theorem}[section]
\newtheorem{lemma}[thm]{Lemma}
\newtheorem{defn}[thm]{Definition}
\author{
Mikhail Belkin \\
        \affaddr{Ohio State University}  \\
        \affaddr{Computer Science and Engineering,} \\
        \affaddr{2015 Neil Avenue, Dreese Labs 597.}  \\
        \affaddr{Columbus, OH 43210} \\
        \email{mbelkin@cse.ohio-state.edu}
\alignauthor
Luis Rademacher \\
        \affaddr{Ohio State University}  \\
        \affaddr{Computer Science and Engineering,} \\
        \affaddr{2015 Neil Avenue, Dreese Labs 495.}  \\
        \affaddr{Columbus, OH 43210} \\
        \email{lrademac@cse.ohio-state.edu}
\and
\alignauthor
James Voss \\
        \affaddr{Ohio State University}  \\
        \affaddr{Computer Science and Engineering,} \\
        \affaddr{2015 Neil Avenue, Dreese Labs 586.}  \\
        \affaddr{Columbus, OH 43210} \\
        \email{vossj@cse.ohio-state.edu}
}
\title{ Blind Signal Separation in the Presence of Gaussian Noise}
\begin{document}
\maketitle
\begin{abstract}
\smallskip
A prototypical blind signal separation problem  is the so-called {\it cocktail party problem}, with $n$ people talking simultaneously and $n$ different microphones within a room. The goal is to  recover  each speech signal  from the microphone inputs. Mathematically this can be modeled by assuming that we are given samples from a\insertion{JV}{n} $n$-dimensional random variable $\vec X=A \vec S$, where $\vec S$ is a vector whose coordinates are independent random variables corresponding to each speaker. The objective is to recover the  matrix $A^{-1}$ given random samples from $\vec X$. A range of techniques  collectively known as Independent Component Analysis (ICA) have been proposed to address this problem in the  signal processing and machine learning literature. Many of these techniques are based on using the kurtosis or other cumulants to recover the components.

In this paper we propose a new algorithm for solving the blind signal separation problem in the presence of additive Gaussian noise, when we are given samples from $\vec X=A\vec S +\gvec \eta$, where $\gvec \eta$ is drawn from an unknown, not necessarily spherical $n$-dimensional Gaussian distribution. Our approach is based on a method for decorrelating a sample with additive Gaussian noise under the assumption that the underlying distribution is a linear transformation of a distribution with independent components. Our decorrelation routine is based on the properties of cumulant tensors and can be combined with any standard cumulant-based method for ICA to get an algorithm that is provably robust in the presence of Gaussian noise. We derive polynomial bounds for \insertion{JV}{the} sample complexity and error propagation of our method.

\end{abstract}
\section{Introduction and related work ~~~~~~~~~~~~~}

A prototypical blind signal separation setting is the so-called \emph{cocktail party problem}: in a room, there are $n$ people speaking simultaneously and $n$ microphones, with each microphone capturing a superposition of the voices.  
The objective is to recover the voice of each individual speaker. The simplest modeling assumption is to consider each speaker as producing a signal to be a random variable independent of the others and to take the superposition  to be a linear transformation independent of time.
This leads to the following problem: given a sample from $n$-dimensional random variable $\vec{X}$, satisfying $\vec X = A \vec S$, where $A$ is a non-singular square matrix and $\vec S$ is another random vector whose coordinates   are unknown independently distributed (but not necessarily identical) random variables, we need to recover the  matrix $A^{-1}$. 
Equivalently, we need to recover the basis corresponding to the directions of the independent components. 

The name Independent Component Analysis refers to a broad range of algorithms  addressing this signal separation problem as well as its variants and extensions. It has generated significant interest and an extensive literature in the signal processing and machine learning communities due to its applicability to a variety of important practical situations including speech~\cite{BSS2007}, vision~\cite{Bell1997} and various biological and medical applications, e.g.,~\cite{Jung2000}.  For a comprehensive introduction see the books~\cite{Comon2010, Hyvarinen2000}. 

One widely used class of algorithms for ICA is based on the remarkable fact that if the data is whitened, that is, $\vec X$ has the zero mean and the identity covariance matrix, then the absolute value of kurtosis reaches its maximum in the  directions corresponding to the independent components. More precisely, consider the kurtosis as a function on the $n$-dimensional unit sphere. For whitened data it can be defined as follows:
\[ 
\vec v \mapsto \kappa_4(\vec v \cdot \vec X) := \e(( \vec v\cdot \vec X)^4) - 3 
\] 
 It can be shown~\cite{Delfosse1995, Frieze1996} that the vectors corresponding to the maxima of the absolute value of $ \kappa_4(\vec v \cdot \vec X) $ form an orthonormal basis whose elements are independent random variables.   Thus the underlying structure of the signal can be recovered by analyzing the behavior of this function. Moreover, computing the kurtosis involves the expected value of the fourth power  of a random variable, which can be easily  approximated from a finite sample.

This observation leads to the following  procedure for the Independent Component Analysis in the noiseless case:\\
{\bf Step 1.}   ``Whiten'' the original signal, that is, apply a linear transformation that transforms the covariance matrix of the sample to the identity.
This is typically achieved by using the Principal Component Analysis (PCA) to transform the input data  to the basis of its principal directions  by an orthogonal transformation  and rescaling the resulting data appropriately. \\
{\bf Step 2.} After the signal is whitened, various optimization procedures can be used to find the maxima of the absolute value of kurtosis over the unit sphere. The independent components are recovered from the directions of these maxima. 

In their recent paper~\cite{Arora2012} Arora, et al. make an important observation that  for a slight variation of Step 2 to work, it is sufficient  for the the sample to be decorrelated (\emph{quasi-whitened}), that is, to have independent coordinates in some orthogonal basis, rather than fully whitened (having the identity covariance matrix). 

In this paper we consider the problem of signal separation for a noisy signal $\vec X=A\vec S +\gvec \eta$, where $\gvec \eta$ is an unknown, not necessarily spherical $n$-dimensional Gaussian distribution. The main difficulty is in Step 1, since the 
principal directions given by PCA are contaminated by the noise and do not generally decorrelate the underlying signal. Interestingly, as a result of the invariance of the kurtosis under the additive Gaussian noise,  Step 2 of the algorithm is still valid and  the usual methods and analyses still apply with minor caveats. 

The main contribution of our paper is addressing the problem of  decorrelating the underlying signal in the presence of noise. 
We show how to approximate a matrix $B$, such that $B^{-1}A$ is diagonal in the basis of independent coordinates. We provide polynomial bounds for the sample complexity and error analysis as well as an analysis of error propagation compatible with any analysis of Step 2. 

 Our approach can be viewed as a noise-invariant version of PCA for the special case when the underlying probability distribution is a product of independent variables. The method is based on the properties of the fourth cumulant tensor, rather than the usual covariance matrix used in PCA.  To the best of our knowledge, this is the first general algorithm for noisy ICA with sample complexity and running time guarantees. Moreover, unlike  methods such as~\cite{Yeredor2000}, our approach is compatible with any optimization procedure for the Step 2.

\noindent{\bf Related work.}
Over the last twenty years blind signal separation\footnote{Also known as Blind Source Separation.} has become a large and active area of research in signal processing and machine learning community. An important class of methods for ICA is based on the properties of kurtosis 
 and other higher-order cumulants. 

Most of these works concentrate on  algorithms, implementations\insertion{JV}{,} and applications and do not provide a sample or running-time complexity analysis for the algorithms. One   such analysis is provided in Frieze, et al.~\cite{Frieze1996}, where the authors address the question of learning a linear transformation, which is equivalent to the ICA problem, and provide a complexity analysis. In  a slightly different context of cryptoanalysis,~\cite{Nguen2009} analyzes a kurtosis-based method for learning a parallelopiped. In~\cite{Vempala2011} the authors analyze a generalized version of ICA for learning higher-dimensional subspace ``juntas'' in the presence of noise.

The problem of blind signal separation in the presence of noise has been an  active topic of research in the machine learning literature.  In particular we would like to point out the  work of Yeredor \cite{Yeredor2000} which proposes an elegant one-step approach for general ICA with Gaussian noise, based on approximating the Hessian of the second characteristic function, namely $v \mapsto \nabla^2_v \log \e_x (e^{v^T x})$,  at a finite number of generic choices of $v$. 
The  recent work of Hsu and Kakade \cite[Section 3, Theorem 3]{Kakade2012} proposes an approach similar to Yeredor's, using the Hessian of the directional kurtosis  instead of  the second characteristic function and makes interesting connections to learning 
Gaussian mixture distributions in high dimension.
Finally, Arora et al.\ \cite{Arora2012} also use a Hessian-based
technique to provide a complete sample complexity analysis for noisy ICA for
the special case when the underlying signal is a uniform distribution over the
$n$-dimensional binary cube $\{-1, 1\}^n$.
\insertion{JV/MB}{The technique of Arora et al.\ also applies when all independent components have kurtosis of the same sign.  However, their technique cannot be used in the general case since it involves extracting the square root of a matrix that is positive definite only under that condition.}
Our approach, based on the full fourth cumulant tensor, does not face
this difficulty.


We also would like to point out that our approach is closely related to the class of tensor methods for data analysis, see e.g.~\cite{Shashua2005, Morton2010}

\section{Properties of Cumulants}\label{sec:cumulants}

\smallskip


Let $\phi_{\vec{X}}(\vec{t})
= \E[\exp(i\vec{t}^T\vec{X})]$, $\vec{t}\in \R^n$ denote the first charateristic function of a $n$-dimensional vector valued random variable
$\vec{X}$, and let $\psi_{\vec{X}}(\vec{t}) = \log (\phi_{\vec{X}}(\vec{t}))$ denote
the second characteristic function of $\vec{X}$.  Cumulants are defined as the coefficients of the Taylor Expansion of the second characteristic function.
Specifically, using the multi-index notation, we write 
\begin{equation*}
    1 + \sum_{r=1}^\infty \sum_{ i_1, \dotsc, i_n \in [n]^r}
	    \frac{1}{r!}i^r\biggl(\prod_{j=1}^r \vec{t}_{i_j}\biggr)\cum(\vec{X}_{i_1}, \dotsc, \vec{X}_{i_r} ) 
	    = \psi_{\vec{X}}(\vec{t}) .
\end{equation*}
%
%
%
For each cumulant $\cum(\vec{X}_{i_1}, \dotsc, \vec{X}_{i_r})$, $r$ is referred to as the order of the cumulant.
Order $r$  cumulants of a random variable $\vec{X}$ can be collected into a cumulant tensor, called the $r^{th}$ cumulant tensor of $\vec{X}$.  For instance, the fourth order cumulant tensor of $\vec{X}$, denoted by $Q_{\vec{X}}$ in this paper, is defined by $(Q_{\vec{X}})_{ijkl} = \cum(\vec{X}_i, \vec{X}_j, \vec{X}_k, \vec{X}_l)$.  Since any simultaneous draw 
of random variables can be viewed as a draw of a single vector-valued random variable, this definition can be used to construct \replacement{JV}{cross cumulants}{cross-cumulants} between arbitrary random variables.
In the univariate case in which $X$ and $t$ are scalars, the notation $\kappa_r(X)$ is used to denote the $r^{th}$ order cumulant $\cum(X, \dotsc, X)$.

Cumulants are similar in flavor to moments, and indeed all cumulants have polynomial expansions in terms of the moments of the same and \replacement{JV}{smalle}{lesser} order.
For example, the fourth cumulant (kurtosis) of a 0-mean
one-dimensional random variable $X$ can be expanded $\kappa_4(X) = \E[X^4] -
3\E[X^2]^2$.  However, cumulants have nice algebraic properties not
shared by moments, properties on which this work relies heavily.
Let $X_1, \dotsc, X_r$ be real-valued random variables.  Then, \replacement{JV}{cross cumulants}{cross-cumulants} are known to manifest the following properties:
\begin{enumerate}
\item (Multilinearity)  If $c_i\in \R$ is a constant, then
\jrnvers{$\cum(X_1, \allowbreak \dotsc, c_iX_i, \dotsc, X_r) = c_i\cum(X_1, \dotsc, X_i, \dotsc, X_r)$.}
    \stdvers{\[\cum(X_1, \allowbreak \dotsc, c_iX_i, \dotsc, X_r) = c_i\cum(X_1, \dotsc, X_i, \dotsc, X_r).\]}  Also, if $Y_i$ is a random variable, then
  \begin{equation*}
    \begin{split}
        &\cum(X_1, \dotsc, X_i+Y_i, \dotsc, X_r) \\
        &\quad=\cum(X_1, \dotsc, X_i, \dotsc, X_r) \jrnvers{\\
        &\quad\quad} + \cum(X_1, \dotsc, Y_i, \dotsc, X_r) .
    \end{split}
  \end{equation*}
\item (Independence) If 2 variables $X_i$ and $X_j$ ($i < j$) are independent random variables,
  then the cross cumulant
    $\cum(X_1, \dotsc, X_i, \dotsc, X_j, \dotsc, X_n)$
  is zero.  Combined with the multilinearity property, this implies that if the variables $Y_1, \dotsc, Y_n$ are independent of $X_1, \dotsc, X_n$, then
  \begin{equation*}
  \begin{split}
    &\cum(X_1 + Y_1, X_2 + Y_2,  \dotsc, X_n + Y_n) \\
    &\quad = \cum(X_1, X_2, \dotsc, X_n) + \cum(Y_1, Y_2, \dotsc, Y_n) .
  \end{split}
  \end{equation*}
\item (Vanishing Gaussians)  The only non-zero cumulant tensors
  of Gaussian random variables are the 1-tensor mean and the 2-tensor covariance matrix.
\end{enumerate}
Note that in  the univariate case, these properties become:
\begin{enumerate}
\item (Additivity)  If $X$ and $Y$ are independent random variables,
  then $\kappa_r(X+Y) = \kappa_r(X) + \kappa_r(Y)$.
\item (Homogeneity of degree $r$)  If $c$ is a constant, then
  $\kappa_r(cX) = c^r\kappa_r(X)$.
\item (Vanishing Gaussians) The only non-zero cumulants of a
  Gaussian random variable are the mean and the variance (the first
  and second order cumulants).
\end{enumerate}

\section{Problem Statement and Main Result}

\smallskip

Let $\vec{x}^{(1)}, \vec{x}^{(2)}, \dotsc, \vec{x}^{(N)} \in \R^n$ be an
i.i.d.\ $N$-sample of vector-valued random variables.
In independent component analysis (ICA) it is assumed that each $\vec{x}^{(i)}$
is generated from a latent random variable $\vec{s}^{(i)}$ via an
unknown mixing matrix $A$ such that
\begin{equation*}
  \vec{x}^{(i)} = A\vec{s}^{(i)} + \gvec{\eta}^{(i)}
\end{equation*}
where $\gvec{\eta}$ is additive noise.  The latent random variable
$\vec{S}$ is typically assumed to be a vector in $\R^n$; though in
principle, it could be a vector in any space $\R^m$ where $m\leq n$.
The individual coordinates of $\vec{S}$ are assumed to be independent
random variables.  $A$ is taken to be a full rank matrix,
$A~\in~\R^{n\times m}$.  It will be assumed for
simplicity that $m=n$, thus making $A$ invertible.
We will further assume that each random
variable $\vec{S}_i$ has variance 1.  Note that this last assumption serves to remove an ambiguity of the problem, since the columns of $A$ could otherwise be chosen to have any scale.  As a result of these
assumptions, \deletion{JV}{the covariance } $\cov(\vec{S}) $ becomes the identity matrix\deletion{JV}{ $I$}.
\insertion{JV}{For convenience, we also assume that $\vec S$ has 0 mean.}

As discussed in the introduction, most ICA algorithms can be broken down into 2 steps.  In the first
step, the independent components are made orthogonal and rescaled such that $\vec{X} = R \vec{S}$ where $R$ is an orthogonal matrix.  This method of decorrelating the independent components is termed whitening.
In the second step, the columns of $R$ (which correspond to independent components) up to sign and order are found.

In the noisy case the main challenge is presented by Step 1, as Step 2 for kurtosis-based methods is naturally invariant to 
Gaussian noise.
Since additive Gaussian noise affects the covariance matrix $\cov(A\vec{S} + \gvec \eta)$, PCA-based whitening fails to orthogonalize the independent components.  It was observed in \cite{Arora2012} that a variation on step 1 could be used. 
It is enough to make the independent components orthogonal without giving
them the same scale.  Whereas true whitening sets $\vec{X} = R\vec{S}$, 
we replace $R$ with $RD$ such that $R$ is orthogonal and $D$ is a diagonal scaling matrix.  
Thus, following \cite{Arora2012}, \emph{quasi-whitening}\footnote{Hyv\"arinen had a different definition of quasi-whitening in \cite{Hyvarinen1999A}.} can be defined as follows:

\begin{defn} \label{def::Quasi-Whitening} A \textit{quasi-whitening}
  matrix is a matrix $W$ such that $WA = RD$ for some orthogonal
  matrix $R$ and nonsingular diagonal matrix $D$.
\end{defn}

We shall now state our main result.
Let $\vec{e}_1, \dotsc, \vec{e}_n$ be the canonical vectors that form a
  basis for the space spanned by the random vector $\vec{S}$.  Let $\kmin = \min_i(\abs{\kappa_4(\vec{S}_i)})$, $\kmax = \max_i(\abs{\kappa_4(\vec{S}_i)})$, and 
  $\mu_k = \max_i(\E[\vec{S}_i^k])$.  \insertion{JV}{Let $\sigma_{\gvec \eta} = \max_{\norm{\vec u} = 1}\sqrt{\vec u^T\Sigma_{\gvec \eta} \vec u}$ where $\Sigma_{\gvec \eta}$ is the covariance of $\gvec \eta$.}
Let $A_i$ denote the $i$\textsuperscript{th} column of matrix $A$. For clarity of the presentation, we use the following machine model for the running time: a random access machine that allows the following exact arithmetic operations over real numbers in constant time: addition, substraction, multiplication, division and square root.

\begin{thm}\label{thm:mainthm}
Let $\eps > 0$ and $\delta \in (0,1)$. Given 
  \[
      N = O\left(\frac{n^{10} \kappa(A)^{16} }{\epsilon^2\delta } \frac{\kmax^2}{\kmin^4} \biggl(\mu_8 +  \frac{\sigma^8_{\gvec{\eta}}}{\sigma_{\min}(A)^8}\biggr) \right)
  \]
samples of $\vec X = A \vec S + \gvec \eta$ we can compute, in time polynomial in $N$ and $n$, an approximate quasi-whitening matrix $\hat B$ so that with probability at least $1-\delta$ over the sample we have
  \begin{enumerate}
    \item For $i \neq j$, 
  \begin{equation} \label{eq:mainthm:cosine}
    -\epsilon \leq \frac{\langle \hat{B}^{-1}A \vec{e}_i, \hat{B}^{-1}A \vec{e}_j \rangle}{\norm{\hat{B}^{-1}A\vec{e}_i}_2\norm{\hat{B}^{-1}A\vec{e}_j}_2} \leq \epsilon    
  \end{equation}
    \item The length of $\vec{e}_j$ is scaled under the transformation $\hat{B}^{-1}A$ as:
      \begin{equation} \label{eq:mainthm:scaling}
        (1-\epsilon) \norms{A_i}_2 \leq \norms{\hat{B}^{-1}A \vec{e}_j}_2 \leq (1+\epsilon) \norms{A_i}_2
      \end{equation}
    \end{enumerate}
\end{thm}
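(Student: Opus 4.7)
The engine of the argument is the following tensor identity. By multilinearity of cumulants, the vanishing of the cumulants of $\gvec \eta$ of order $\geq 3$, and the independence of the coordinates of $\vec S$, one computes directly from the definition that for any $\vec u \in \R^n$,
\[
    M(\vec u)_{ij} \;:=\; \sum_{k,l} (Q_{\vec X})_{ijkl}\, u_k u_l ,
    \qquad M(\vec u) \;=\; A\, D(\vec u)\, A^T,
\]
where $D(\vec u)$ is the diagonal matrix with entries $\kappa_4(\vec S_p)\,\langle A_p, \vec u\rangle^2$. The noise drops out, and $M(\vec u)$ becomes a noise-free surrogate for the covariance that exposes the column structure of $A$ even though $\cov(\vec X)$ is badly corrupted by $\gvec \eta$.

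On top of this identity I would build $\hat B$ as follows: draw two independent random directions $\vec u_1, \vec u_2$ (say i.i.d.\ Gaussian), form the empirical slices $\hat M_1, \hat M_2$ from the empirical cumulant tensor $\hat Q_{\vec X}$, and compute the right eigenvectors of $\hat M_1 \hat M_2^{-1}$. In the population limit $M_1 M_2^{-1} = A (D_1 D_2^{-1}) A^{-1}$ has real eigenvalues $\langle A_i,\vec u_1\rangle^2 / \langle A_i,\vec u_2\rangle^2$ (the $\kappa_4(\vec S_i)$ factors cancel in the ratio, which is exactly what lets the approach tolerate mixed-sign kurtoses, unlike \cite{Arora2012}) and right eigenvectors equal to the columns of $A$. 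Normalizing each estimated eigenvector to unit length and assembling them into $\hat B$ yields $\hat B \approx A\, \diag(1/\|A_i\|)$ up to a column permutation, whence $\hat B^{-1} A \approx \diag(\|A_i\|)$; this gives the orthogonality condition (1) trivially and the norm condition (2) up to the same small error.

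The sample-complexity part reduces to two concentration/conditioning estimates. First, each entry of $\hat Q_{\vec X}$ concentrates: the empirical fourth cumulant is a degree-$4$ polynomial in empirical moments of $\vec X$, so its per-entry variance is controlled by the $8$th moments of $\vec X$, which split via $\vec X = A\vec S + \gvec \eta$ as order $\mu_8 + \sigma_{\gvec \eta}^8 / \sigma_{\min}(A)^8$ once the Gaussian contribution is transported into source coordinates. A Chebyshev bound combined with a union bound over the $O(n^4)$ tensor entries gives a per-entry error of order $n^2 \sigma/\sqrt{\delta N}$, and pushing this through the contraction with $\vec u$ and the Frobenius-to-entrywise conversions yields the $n^{10}$ in the final sample size. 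Second, I would propagate this tensor error through $\hat M_2^{-1}$ (costing $\sigma_{\min}(M_2)^{-1} \gtrsim (\sigma_{\min}(A)^2 \kmin)^{-1}$) and through a Bauer--Fike-style eigenvector perturbation for the non-symmetric matrix $\hat M_1 \hat M_2^{-1}$ (costing the conditioning $\kappa(A)$ of the eigenbasis, divided by the eigengap); these two steps account for the $\kappa(A)^{16}$, $\kmax^2$ and $\kmin^{-4}$ in the sample size.

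The hardest step will be the non-symmetric eigenvector perturbation. A useful eigengap requires that the ratios $\langle A_i,\vec u_1\rangle^2 / \langle A_i,\vec u_2\rangle^2$ be separated across $i$; the natural way to obtain this quantitatively is Gaussian anti-concentration applied to the bilinear form $\langle A_i,\vec u_1\rangle \langle A_j,\vec u_2\rangle - \langle A_j,\vec u_1\rangle \langle A_i,\vec u_2\rangle$, which vanishes exactly when the $i$-th and $j$-th eigenvalues collide. Taking a union bound over the $\binom{n}{2}$ pairs then gives a gap of order $\poly(1/n, 1/\kappa(A), \delta)$ with probability $1-\delta/2$. Combined with a perturbation bound that explicitly tracks the conditioning $\kappa(A)$ of the non-orthogonal eigenbasis, this closes the loop and yields the stated polynomial dependence on $n$, $\kappa(A)$, $\kmin$, $\kmax$, $\mu_8$, and $\sigma_{\gvec \eta}/\sigma_{\min}(A)$.
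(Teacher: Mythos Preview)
Your proposal is a sound alternative, but it takes a genuinely different route from the paper. Both start from the same identity (your $M(\vec u)$ is $Q_{\vec X}\circ(\vec u\vec u^T)$ in the paper's notation, and Lemma~\ref{lemma:cumulantDiagonalization} gives $Q_{\vec X}\circ M = A D A^T$ for any $M$), but the paper does \emph{not} pick random $\vec u$'s and diagonalize $M_1 M_2^{-1}$. Instead it makes a deterministic two-pass choice: first $M=I$, giving $Q_{\vec X}\circ I = A D' A^T$ with $D'_{qq}=\kappa_4(\vec S_q)\|A_q\|^2$; then it sets $M=(Q_{\vec X}\circ I)^{-1}$ and contracts again. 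The point of this second pass (Theorem~\ref{thm:TensorQuasi-Whitening}) is that the new diagonal becomes $D_{qq}=1/\|A_q\|^2$: the kurtoses cancel, as in your ratio, but now $D$ is \emph{positive definite} regardless of kurtosis signs, so $A D A^T$ admits a real symmetric factorization $BB^T$, and $\hat B^{-1}$ is the quasi-whitening matrix outright. The entire error analysis then reduces to two applications of a matrix-inversion perturbation bound (Lemma~\ref{lemma:Matrix_Inversion_Error_General}) and the elementary Lemma~\ref{lemma:diagMatrixError}; there is no eigengap, no Bauer--Fike for a non-symmetric matrix, no randomness in the construction of $\hat B$, and no anti-concentration argument.

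What each buys: your approach actually recovers more---the columns of $A$ up to scale, essentially doing Step~1 and Step~2 simultaneously---but you pay for it with the ``hardest step'' you identify (non-symmetric eigenvector perturbation conditioned on a random eigengap), and you also need a lower bound on $\min_i\langle A_i,\vec u_2\rangle^2$ to control $\sigma_{\min}(M_2)$, which is another anti-concentration event you did not list. The paper's approach is deliberately more modest and more modular: it produces only a quasi-whitening matrix to be handed to any cumulant-based Step~2, and in exchange the perturbation analysis is entirely elementary and deterministic given the $k$-statistic tensor estimate.
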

In simpler words, quasi-whitening approximately orthogonalizes the independent components of $\vec X$ and
scales the independent components based on the lengths of the columns of $A$.

We note that existing cumulant-based methods already employed for step 2 in ICA can be modified in reasonably straightforward ways to work under quasi-whitening.
Several popular ICA algorithms including JADE \cite{Cardoso1993} and the kurtosis-based implementation of FastICA \cite{Hyvarinen1999, Hyvarinen2000} are implemented using cumulants.  Since higher order cumulants ignore Gaussian noise, this allows for the creation of a class of new algorithms that are resistant to additive Gaussian noise. \deletion{JV}{The special case where each $\vec{S}$ is drawn uniformly from $\{-1, 1\}^n$ has been done by Arora et al in \cite{Arora2012}.}

To see the validity of fourth cumulant based algorithms for the second step of ICA in the presence of Gaussian noise, we draw from Observation 2 of Frieze et.\ al.\ in \cite{Frieze1996}. 
An interpretation of the statement and proof is that given $\alpha_1, \alpha_2, \dotsc, \alpha_n \in \R$ such that each $\alpha_i \neq 0$ and the function $G(\vec{v}) = \sum_{i=1}^n \vec{v}_i^4 \alpha_i$ such that $\vec{v}$ is restricted to the unit sphere, we have that when there exists some $\alpha_i > 0$, a complete list of local maxima of $G(\vec{v})$ is given by $\{ \pm \vec e_i : \alpha_i>0\}$ (where $\vec e_i$ is the $i$th canonical vector). Similarly, when there exists some $\alpha_i < 0$, a complete list of local minima of $G(\vec{v})$ is given by $\{ \pm \vec e_i : \alpha_i<0\}$.
Using the properties of cumulants, it follows that given $\vec{v} \in \R^n$, 
\begin{equation}
  \label{eq:fourthCumulantProjPursuitForm}
  \kappa_4(\vec{v} \cdot \vec{S}) = \sum_{i=1}^n \vec{v}_i^4 \kappa_4(\vec{S}_i),
\end{equation}
where $\kappa_4(\vec{S}_i)$ takes on the role of $\alpha_i$.  
As such, any algorithm that maximizes $\abs{\kappa_4(\vec{v}\cdot \vec{S})}$ or alternatively $\kappa_4(\vec{v}\cdot \vec{S})^2$ over the unit sphere will find the canonical vectors.  Of course, one cannot work in the coordinate system of $\vec{S}$, but under the assumption of orthogonality provided, one can instead maximize $\abs{\kappa_4(\vec{u}\cdot \vec{X})}$ where $\vec{u}$ is restricted to the unit sphere since
\begin{equation*}
  \kappa_4(\vec{u} \cdot \vec{X}) = \kappa_4(\vec{u}\cdot (RD\vec{S}))
  = \kappa_4((R^T\vec{u})\cdot (D\vec{S}))
\end{equation*}
using that additive Gaussian noise is ignored by cumulants.  
$D\vec{S}$ is simply a rescaling of $\vec{S}$, and
$\kappa_4(\vec{S}_i)$ can be replaced by $\kappa_4(d_{ii} \vec{S}_i)$
in equation \eqref{eq:fourthCumulantProjPursuitForm}.
Using the change of variable $\vec{v} = R^T\vec{u}$, any locally maximal value for $\vec{u}$ will correspond to a column of $R$, thus recovering a component $\vec{S}_i$
up to scaling and noise.
In \cite{Frieze1996}, Observation 2 summarizes a very similar result in the case of true whitening without additive Gaussian noise using the fourth moment instead of fourth cumulant, 
and a mostly correct efficient algorithm and analysis is provided for the fourth moment based on this observation.

\section{How to Achieve Quasi-Whitening} \label{section:Quasi_Whitening}

\smallskip

%
%

Recall that $Q_{\vec{X}}$ denotes the fourth
cumulant tensor of the observed variable $\vec{X}$, with $ijkl^{th}$
entry:
\begin{equation*}
  (Q_{\vec{X}})_{ijkl} = \cum(\vec{X}_i, \vec{X}_j, \vec{X}_k, \vec{X}_l),
\end{equation*}
and define an operation of tensors on matrices $\mathbb{T} \times
\R^{n\times n} \rightarrow\R^{n\times n}$ by:
\begin{equation*}
  (Q_{\vec{X}}\circ M)_{ij} = \sum_{k, l=1}^n \cum(\vec{X}_i,
  \vec{X}_j, \vec{X}_k, \vec{X}_l) m_{lk}.
\end{equation*}

Before proceeding with the argument leading to the construction of a quasi-whitening matrix,
it is worth making several observations about this operation.  First, the
operation can be viewed as matrix-vector multiplication.  Use
multi-indices $\alpha, \beta$ such that $\alpha$ runs over $(i, j)$ and $\beta$ runs over $(l, k)$,
and note that by symmetry, $(Q_{\vec{X}})_{ijkl} = (Q_{\vec{X}})_{ijlk} = (Q_{\vec{X}})_{\alpha \beta}$.
Under this flattening of the tensor $Q_{\vec{X}}$, the operation becomes
matrix-vector multiplication with $M$ taking on the role of the vector
using $m_{lk} = m_{\alpha}$.

The following Lemma describes how the cumulant tensor transforms under a linear change of variable:
\begin{lemma} \label{lemma:COV_Tranformation}
  Given a random vector-valued variable $\vec{Y} \in \R^n$ and matrices $B, M \in \R^{n \times n}$, then
  $Q_{B\vec{Y}}\circ M = B(Q_{\vec{Y}} \circ(B^T M B) )B^T$.
\end{lemma}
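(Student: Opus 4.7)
The plan is to unfold both sides by direct index-pushing and show they agree entrywise. Everything follows from multilinearity of cumulants plus some careful index bookkeeping, so there is no conceptual obstacle — the only thing to be careful about is the placement of $B$ versus $B^T$, which is a bit subtle because of the way the tensor operation $\circ$ was defined (the summation indices on $M$ are $l,k$ rather than $k,l$).

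Concretely, I would start with the left-hand side
\[
    (Q_{B\vec Y}\circ M)_{ij} \;=\; \sum_{k,l=1}^n \cum\bigl((B\vec Y)_i,(B\vec Y)_j,(B\vec Y)_k,(B\vec Y)_l\bigr)\,m_{lk},
\]
write $(B\vec Y)_a = \sum_p B_{ap}\vec Y_p$, and expand the cumulant using multilinearity (Property 1 of Section~\ref{sec:cumulants}) to get
\[
    \cum\bigl((B\vec Y)_i,(B\vec Y)_j,(B\vec Y)_k,(B\vec Y)_l\bigr)
    = \sum_{p,q,r,s} B_{ip}B_{jq}B_{kr}B_{ls}\,\cum(\vec Y_p,\vec Y_q,\vec Y_r,\vec Y_s).
\]
Substituting back and factoring the outer $B_{ip}B_{jq}$ out of the sums over $k,l$ isolates the inner expression $N_{sr} := \sum_{k,l} B_{kr}\,m_{lk}\,B_{ls}$.

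The key calculation is then just recognizing $N = B^T M B$: relabeling $l\mapsto a$, $k\mapsto b$ gives $N_{sr} = \sum_{a,b} B_{as}M_{ab}B_{br} = (B^T M B)_{sr}$, so
\[
    \sum_{r,s}\cum(\vec Y_p,\vec Y_q,\vec Y_r,\vec Y_s)\,N_{sr} \;=\; \bigl(Q_{\vec Y}\circ(B^T M B)\bigr)_{pq}
\]
by the definition of the $\circ$ operation. What remains is
\[
    (Q_{B\vec Y}\circ M)_{ij} \;=\; \sum_{p,q} B_{ip}\,\bigl(Q_{\vec Y}\circ(B^T M B)\bigr)_{pq}\,(B^T)_{qj},
\]
which is precisely the $(i,j)$ entry of $B\bigl(Q_{\vec Y}\circ(B^T M B)\bigr)B^T$.

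The only place one could slip is the order of indices on $M$ in the definition of $\circ$: writing $m_{lk}$ instead of $m_{kl}$ is exactly what makes the inner contraction produce $B^T M B$ rather than $B^T M^T B$. So the proof is really a one-line application of multilinearity followed by careful relabeling of summation indices to match the definition of $\circ$; no properties of cumulants beyond multilinearity are needed.
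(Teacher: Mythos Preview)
Your proof is correct and follows essentially the same route as the paper's own proof: both expand $(Q_{B\vec Y}\circ M)_{ij}$ by multilinearity, identify the inner sum over $k,l$ as $(B^T M B)$, and then recognize the remaining expression as the $(i,j)$ entry of $B(Q_{\vec Y}\circ(B^T M B))B^T$. The only differences are cosmetic (your summation indices are $p,q,r,s$ where the paper uses $q,r,s,t$), and your explicit remark about the $m_{lk}$ index ordering is a nice clarification the paper leaves implicit.
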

  \begin{proof}
    The proof follows primarily from the multilinearity of cumulants:
    \begin{align*}
\jrnvers{    &  (Q_{B\vec{Y}} \circ M)_{ij} \\ }
\stdvers{	      (Q_{B\vec{Y}} \circ M)_{ij} }
	&= \sum_{k, l=1}^n \cum((B\vec{Y})_i, (B\vec{Y})_j, (B\vec{Y})_k, (B\vec{Y})_l) m_{lk} \\
	&= \sum_{k, l=1}^n \sum_{q,r,s,t = 1}^n
	        \cum(b_{iq}\vec{Y}_q, b_{jr}\vec{Y}_r, b_{ks}\vec{Y}_s, b_{lt}\vec{Y}_t) m_{lk} \\
	&= \sum_{k, l=1}^n \sum_{q,r,s,t=1}^n b_{iq} b_{jr}
	        \cum(\vec{Y}_q,\vec{Y}_r,\vec{Y}_s,\vec{Y}_t)  b_{lt} m_{lk} b_{ks}\\
	&= \sum_{q, r, s, t=1}^n b_{iq} b_{jr} \cum(\vec{Y}_q,\vec{Y}_r,\vec{Y}_s,\vec{Y}_t)(B^T M B)_{ts} \\
	&= \sum_{q, r=1}^n b_{iq} b_{jr} (Q_{\vec{Y}} \circ (B^T M B))_{qr},
    \end{align*}
    which can be equivalently written as $Q_{B\vec{Y}} \circ M = B(Q_{\vec{Y}} \circ (B^T M B))B^T$.
  \end{proof}

The above ideas will be useful both in constructing a quasi-whitening matrix in the noiseless case, as
well as in finding an estimate to a quasi-whitening matrix from data.  What follows is  the
construction of a quasi-whitening matrix when one knows the cumulant tensor exactly.

\begin{lemma}
  \label{lemma:cumulantDiagonalization}
  Let $M$ be an arbitrary matrix.  Then, $Q_{\vec{X}} \circ M =
  ADA^T$ where $D$ is a diagonal matrix with $q^{th}$ entry $d_{qq} =
  \kappa_4(\vec{S}_q) A_q^T M A_q$.
\end{lemma}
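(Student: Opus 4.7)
The plan is to decompose the problem into three pieces: strip off the Gaussian noise, change variables via the mixing matrix $A$, and then exploit the diagonal structure of the cumulant tensor of $\vec S$.

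First, I would use the Vanishing Gaussians and Independence properties of cumulants together with multilinearity to argue that $Q_{\vec X} = Q_{A\vec S + \gvec\eta} = Q_{A\vec S}$. Concretely, each entry $\cum(\vec X_i, \vec X_j, \vec X_k, \vec X_l)$ expands into a sum of $2^4 = 16$ cumulants whose arguments are chosen independently from $\{(A\vec S)_i, \gvec\eta_i\}$ and similarly for $j,k,l$; since $A\vec S$ and $\gvec\eta$ are independent, only the two pure terms survive, and the pure Gaussian term vanishes because $\gvec\eta$ is Gaussian. Hence $Q_{\vec X} \circ M = Q_{A\vec S} \circ M$ for any $M$.

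Next, I would apply Lemma~\ref{lemma:COV_Tranformation} with $B = A$ and $\vec Y = \vec S$ to obtain
\begin{equation*}
  Q_{\vec X} \circ M \;=\; Q_{A\vec S} \circ M \;=\; A\bigl(Q_{\vec S} \circ (A^T M A)\bigr) A^T.
\end{equation*}
This reduces the problem to understanding $Q_{\vec S} \circ N$ for $N := A^T M A$.

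Finally, I would invoke the Independence property of cumulants applied to $\vec S$: since the coordinates $\vec S_1,\dotsc,\vec S_n$ are independent, the entry $(Q_{\vec S})_{qrst} = \cum(\vec S_q, \vec S_r, \vec S_s, \vec S_t)$ vanishes unless $q = r = s = t$, in which case it equals $\kappa_4(\vec S_q)$. Plugging this into the definition of the $\circ$ operation,
\begin{equation*}
  (Q_{\vec S} \circ N)_{qr} \;=\; \sum_{s,t} (Q_{\vec S})_{qrst}\, n_{ts} \;=\; \kappa_4(\vec S_q)\, n_{qq}\, \delta_{qr},
\end{equation*}
so $Q_{\vec S} \circ N$ is the diagonal matrix $\tilde D$ with $\tilde d_{qq} = \kappa_4(\vec S_q) n_{qq}$. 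Setting $N = A^T M A$ gives $n_{qq} = A_q^T M A_q$, and combining the displays yields $Q_{\vec X} \circ M = A D A^T$ with $d_{qq} = \kappa_4(\vec S_q) A_q^T M A_q$, as claimed.

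There is no serious obstacle here: the proof is a bookkeeping exercise that chains together the three properties of cumulants (multilinearity, independence, vanishing Gaussians) with the already-proved change-of-variables formula. The only point worth being careful about is the initial noise-removal step, where one should note that additive Gaussian noise would contribute only via the pure $\cum(\gvec\eta_i,\gvec\eta_j,\gvec\eta_k,\gvec\eta_l)$ term (all mixed terms die by independence of $A\vec S$ and $\gvec\eta$), and that term vanishes because $\gvec\eta$ is Gaussian.
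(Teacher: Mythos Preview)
Your argument is correct. The only minor imprecision is in the noise-removal step: you say ``each entry \ldots\ expands into a sum of $2^4 = 16$ cumulants''; this is right by multilinearity, and the mixed terms indeed vanish because any argument drawn from $\gvec\eta$ is independent of any argument drawn from $A\vec S$, so the Independence property kills every cross term. The remaining pure-$\gvec\eta$ term vanishes by Vanishing Gaussians. This is exactly what you claim; I am just noting that the reasoning is sound.

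Your route differs from the paper's in structure. The paper carries out a single direct expansion of $(Q_{\vec X}\circ M)_{ij}$: it substitutes $\vec X_i = \sum_q A_{iq}\vec S_q + \gvec\eta_i$ into all four slots at once, applies multilinearity, independence, and Vanishing Gaussians in one pass to collapse the quadruple sum over $q,r,s,t$ to a single sum over $q$, and then recognises the result as $(ADA^T)_{ij}$. You instead factor the computation into three clean stages---strip the noise, apply the already-proved change-of-variables Lemma~\ref{lemma:COV_Tranformation}, then evaluate $Q_{\vec S}\circ N$ on the diagonal tensor $Q_{\vec S}$. Your version is more modular and avoids re-deriving the content of Lemma~\ref{lemma:COV_Tranformation} inside the proof; the paper's version is self-contained and does not rely on that earlier lemma. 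Both arrive at the same bookkeeping, just organised differently.
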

  \begin{proof}
    This proof will proceed by simplifying $Q_\vec{X} \circ M$ using
    the properties of cumulants.
    \begin{align*}
\jrnvers{    &  (Q_\vec{X} \circ M)_{ij} \\ }
\stdvers{   (Q_\vec{X} \circ M)_{ij} }
	&= \sum_{k, l=1}^n \cum(\vec{X}_i, \vec{X}_j, \vec{X}_k,
	\vec{X}_l) m_{lk} \\
	&= \sum_{k, l=1}^n \cum\left( \sum_{q=1}^n A_{iq} \vec{S}_q + \gvec{\eta}_i, \sum_{q=1}^n  A_{jq}
	\vec{S}_q + \gvec{\eta}_j, \right.  \notag \\
	& \hspace{ \jrnvers{ 1.8 cm } \stdvers{ 2.2 cm } } \left.
	 \sum_{q=1}^n A_{kq} \vec{S}_q + \gvec{\eta}_k, \sum_{q=1}^n A_{lq}
	\vec{S}_q + \gvec{\eta}_l\right) m_{lk} \\
	&= \sum_{k, l=1}^n \sum_{q=1}^n \cum( A_{iq} \vec{S}_q,  A_{jq}
	\vec{S}_q, A_{kq} \vec{S}_q, A_{lq} \vec{S}_q) m_{lk}\\
	&= \sum_{k, l=1}^n \sum_{q=1}^n A_{iq} A_{jq} A_{kq} A_{lq} \cum( \vec{S}_q, \vec{S}_q, \vec{S}_q, \vec{S}_q) m_{lk}, 
    \end{align*}
    where the last two equalities come from the independence,
    multilinearity, and vanishing Gaussian properties.  Switching
    into univariate cumulant notation and rearranging summations
    yields:
    \begin{align*}
      (Q_\vec{X} \circ M)_{ij} &= \sum_{q=1}^n A_{iq}A_{jq} \kappa_4(\vec{S}_q) \sum_{k, l}^n A_{lq} m_{lk} A_{kq} \\
	&= \sum_{q=1}^n A_{iq}A_{jq} \kappa_4(\vec{S}_q) A_q^T M A_q
    \end{align*}
    which has matrix form:
    \begin{equation*}
      Q_\vec{X} \circ M = ADA^T
    \end{equation*}
    where $D$ is a diagonal matrix with diagonal entries 
    $d_{qq} = \kappa_4(\vec{S}_q) A^T_q M A_q$.
  \end{proof}

\begin{thm}
  \label{thm:TensorQuasi-Whitening}
  Let $M$ be the matrix $(Q_{\vec{X}}\circ I)^{-1}$.  Let $B$ be a
  factorization matrix such that $BB^T = Q_{\vec{X}}\circ M$.
  Then, $B^{-1}$ is a Quasi-Whitening matrix.
\end{thm}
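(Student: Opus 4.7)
My plan is to unfold both applications of Lemma \ref{lemma:cumulantDiagonalization} and show that the composed operation $Q_{\vec X}\circ(Q_{\vec X}\circ I)^{-1}$ has the very clean form $A D A^T$ with a \emph{positive} diagonal $D$, after which any symmetric factorization must agree with $A D^{1/2}$ up to an orthogonal factor.

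\textbf{Step 1: Compute $M$.} By Lemma \ref{lemma:cumulantDiagonalization} applied with the identity matrix, $Q_{\vec X}\circ I = A D_1 A^T$ where $D_1$ is diagonal with entries $(D_1)_{qq} = \kappa_4(\vec S_q)\,\|A_q\|^2$. Under the standing ICA assumption that each $\kappa_4(\vec S_q)\neq 0$ and that $A$ is invertible (so every $A_q\neq 0$), $D_1$ is invertible and
\[
 M = (Q_{\vec X}\circ I)^{-1} = A^{-T} D_1^{-1} A^{-1}.
\]

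\textbf{Step 2: Evaluate $Q_{\vec X}\circ M$.} Applying Lemma \ref{lemma:cumulantDiagonalization} a second time, $Q_{\vec X}\circ M = A D_2 A^T$ with $(D_2)_{qq} = \kappa_4(\vec S_q)\, A_q^T M A_q$. The key observation is that $A^{-1} A_q = \vec e_q$, so
\[
 A_q^T M A_q = (A^{-1}A_q)^T D_1^{-1}(A^{-1}A_q) = \vec e_q^{\,T} D_1^{-1}\vec e_q = \frac{1}{\kappa_4(\vec S_q)\,\|A_q\|^2}.
\]
Substituting back gives $(D_2)_{qq} = 1/\|A_q\|^2$; in particular $D_2$ is diagonal with strictly positive entries, so $Q_{\vec X}\circ M$ is positive definite and the factorization $B B^T = Q_{\vec X}\circ M$ is well defined.

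\textbf{Step 3: Identify $B$ up to an orthogonal factor.} Both $B$ and $A D_2^{1/2}$ satisfy $XX^T = A D_2 A^T$. Since $A$ is invertible, $D_2$ is invertible, so both $X$'s are invertible; then $(A D_2^{1/2})^{-1} B$ has $(A D_2^{1/2})^{-1} B B^T (A D_2^{1/2})^{-T} = I$, hence it equals some orthogonal $R$. Therefore $B = A D_2^{1/2} R$, and
\[
 B^{-1} A = R^T D_2^{-1/2}.
\]
The matrix $R^T$ is orthogonal and $D_2^{-1/2}$ is a nonsingular diagonal matrix (with entries $\|A_q\|$), so $B^{-1}$ meets Definition \ref{def::Quasi-Whitening} of a quasi-whitening matrix.

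The only step that requires any care is Step 2: noticing that the change-of-basis identity $A^{-1}A_q = \vec e_q$ makes $\kappa_4(\vec S_q)$ and $\|A_q\|^2$ cancel exactly between the two applications of the lemma, producing the positive diagonal that lets a symmetric factorization exist and then lets us read off the desired orthogonal-times-diagonal structure.
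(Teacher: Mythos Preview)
Your proof is correct and follows essentially the same route as the paper's: two applications of Lemma~\ref{lemma:cumulantDiagonalization} give $Q_{\vec X}\circ M = A D A^T$ with $D_{qq}=1/\|A_q\|^2>0$, and then any square-root factor of this positive-definite matrix differs from $AD^{1/2}$ by an orthogonal factor, yielding $B^{-1}A$ of the form orthogonal-times-diagonal. The only cosmetic difference is that you phrase Step~3 by looking at $(AD_2^{1/2})^{-1}B$ while the paper looks at $B^{-1}AD^{1/2}$, which are inverses of one another and lead to the same conclusion.
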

  \begin{proof}
    Applying Lemma \ref{lemma:cumulantDiagonalization} gives
    $Q_{\vec{X}} \circ I = AD'A^T$ with $d'_{qq} = \kappa_4(\vec{S}_q) A_q
    \cdot A_q$.  Note that $M = (A^T)^{-1}D'^{-1} A^{-1}$.  Applying
    Lemma \ref{lemma:cumulantDiagonalization} a second time yields
    $Q_{\vec{X}} \circ M = ADA^T$ where $d_{qq} = \kappa_4(\vec{S}_q)
    A_q^T M A_q$ gives the diagonal elements of $D$.  Manipulating
    $d_{qq}$ yields:
    \begin{align*}
      d_{qq} &= \kappa_4(\vec{S}_q) A_q^T (A^T)^{-1}D'^{-1} A^{-1} A_q \\
            &= \kappa_4(\vec{S}_q) \vec{e}^T_q (D')^{-1} \vec{e_q} \\
            &= \kappa_4(\vec{S}_q) [\kappa_4(\vec{S}_q) A_q\cdot A_q]^{-1} \\
            &= \frac{1}{\norm{A_q}_2^2}
    \end{align*}
    Note that $d_{qq}$ is a positive number for each diagonal entry of
    $D$.  $D^{1/2}$ exists and can be
    uniquely defined by taking the positive square root of all
    diagonal entries.  Letting $B$ be any factorization matrix such
    that $BB^T = Q_{\vec{X}} \circ ((Q_{\vec{X}} \circ I)^{-1}) =
    ADA^T$, then $I = B^{-1}AD^{1/2}(B^{-1}AD^{1/2})^T$ gives that
    $B^{-1}AD^{1/2} = R$ for some orthogonal matrix $R$.  Hence,
    $B^{-1}A = RD^{-1/2}$ gives that $B^{-1}$ is a quasi-whitening
    matrix.
  \end{proof}

\section{Estimation of Cumulants}

\smallskip

So far we have shown that given exact knowledge of the fourth
order cumulant tensor for the random variable $\vec{X} = A\vec{S} + \gvec{\eta}$, it
is possible to find a quasi-whitening matrix $B^{-1}$ such that
$B^{-1}A = RD$ for some orthogonal and diagonal matrices $R$ and $D$ respectively.
In practice, one does not have exact knowledge of the cumulant tensor, and
the cumulant tensor thus needs to be estimated from samples.  
Cumulants can
be estimated using $k$-statistics, which are unbiased estimates of
cumulants.  $k$-statistics have been studied within the statistics
community, and are discussed in chapter 4 of
\cite{McCullagh1987}.  For the fourth order cumulant tensor, given
random variables $\vec{Y}_1, \vec{Y}_2, \dotsc, \vec{Y}_n$, the
$k$-statistic $k(\vec{Y}_i, \vec{Y}_j, \vec{Y}_k, \vec{Y}_l)$, which
estimates $\cum(\vec{Y}_i, \vec{Y}_j, \vec{Y}_k, \vec{Y}_l)$, is:
  \begin{equation*} 
   k(\vec{Y}_i, \vec{Y}_j, \vec{Y}_k, \vec{Y}_l) = \frac{1}{N} \sum_{r,s,t,u=1}^N \phi(r,s,t,u) \vec{y}_i^{(r)}\vec{y}_j^{(s)}\vec{y}_k^{(t)}\vec{y}_l^{(u)},
\end{equation*}
  where $\phi$ is a function  invariant under permutations of its
  indices defined by $\phi(i, i, i, i) = 1$, $\phi(i, i, i, j) =
  \phi(i, i, j, j) = -1/(N-1)$, $\phi(i, i, j, k) = 2/[(N-1)(N-2)]$,
  and $\phi(i, j, k, l) = -6/[(N-1)(N-2)(N-3)]$ when $i, j, k, l \in [N]$
  are distinct \cite{McCullagh1987}.

  $k$-statistics share several
  important properties with the cumulant tensors that they estimate.
  The $k$-statistic is symmetric in that $k(X_i, X_j, X_k, X_l)$ is
  invariant under reordering of indices, and it is also multilinear.
  Multilinearity is shown for the fourth $k$-statistic in the
  following Lemma.
  \begin{lemma}\label{lem:kstat}
   The $k$-statistic transforms multilinearly.
  \end{lemma}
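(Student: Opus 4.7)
My plan is to prove multilinearity essentially by inspection of the formula, exploiting that the summand in the $k$-statistic definition is a product of four factors, one per argument slot, each depending on only one of the four random variables.

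First I would treat homogeneity. Fix the first slot and consider $k(c\vec{Y}_i, \vec{Y}_j, \vec{Y}_k, \vec{Y}_l)$. Since the $r$-th sample of $c\vec{Y}_i$ is $c\,\vec{y}_i^{(r)}$, substituting into the defining sum replaces $\vec{y}_i^{(r)}$ with $c\,\vec{y}_i^{(r)}$. The constant $c$ does not depend on any summation index and so factors outside, giving $c\cdot k(\vec{Y}_i,\vec{Y}_j,\vec{Y}_k,\vec{Y}_l)$.

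Next I would handle additivity in the first slot. Given a random variable $\vec{Z}_i$ jointly sampled with $\vec{Y}_i$ (so that $(\vec{Y}_i+\vec{Z}_i)^{(r)} = \vec{y}_i^{(r)} + \vec{z}_i^{(r)}$), the substitution turns the summand into
\[
\phi(r,s,t,u)\bigl(\vec{y}_i^{(r)} + \vec{z}_i^{(r)}\bigr)\vec{y}_j^{(s)}\vec{y}_k^{(t)}\vec{y}_l^{(u)}.
\]
Distributing $(\vec{y}_i^{(r)} + \vec{z}_i^{(r)})$ and splitting the sum over $r,s,t,u$ into two identical copies immediately recognizes the two copies as $k(\vec{Y}_i,\vec{Y}_j,\vec{Y}_k,\vec{Y}_l)$ and $k(\vec{Z}_i,\vec{Y}_j,\vec{Y}_k,\vec{Y}_l)$ respectively.

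Finally, because $\phi$ is already noted to be invariant under permutations of its indices, the roles of the four slots are interchangeable: the same two computations work verbatim in slots two, three, and four. Combining these four instances of homogeneity and additivity gives the claimed multilinearity. There is no real obstacle here; the only thing one must be careful about is that in the additivity step $\vec{Y}_i$ and $\vec{Z}_i$ have to be viewed on a common sample space so that their sum has the samples $\vec{y}_i^{(r)} + \vec{z}_i^{(r)}$, which matches the convention used throughout the paper for multilinearity of cumulants.
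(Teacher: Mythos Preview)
Your proposal is correct and follows essentially the same route as the paper: both verify homogeneity and additivity in the first slot by direct substitution into the defining sum and then factor or distribute, relying on the fact that each argument contributes a single factor $\vec{y}_\bullet^{(\bullet)}$ to the summand. The paper simply states it checks only the first coordinate ``for simplicity of notation,'' whereas you make the reduction to the other slots explicit via the permutation invariance of $\phi$; this is a cosmetic difference, not a substantive one.
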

   \begin{proof}
     There are 2 properties of multilinearity.  For simplicity of
     notation, they will be only shown on the first coordinate of the
     k-statistic function.  Let $\vec{Y}_i, \vec{Y}_j, \vec{Y}_k,
     \vec{Y}_l, \vec{Z}_i$ be random variables, and let $c \in \R$.
     Then
     \begin{enumerate}
     \item The additivity portion of multilinearity comes from:
       \begin{align*}
         & k(\vec{Y}_i+\vec{Z}_i, \vec{Y}_j, \vec{Y}_k, \vec{Y}_l) \\
         &= \frac{1}{N}\sum_{r,s,t,u=1}^N \phi(r,s,t,u)
            (\vec{y}_i^{(r)}+\vec{z}_i^{(r)})\vec{y}_j^{(s)}\vec{y}_k^{(t)}\vec{y}_l^{(u)} \\
         &= \frac{1}{N}\left[\sum_{r,s,t,u=1}^N\phi(r,s,t,u)
            \vec{y}_i^{(r)}\vec{y}_j^{(s)}\vec{y}_k^{(t)}\vec{y}_l^{(u)}
            \jrnvers{ \right. \notag   \\
           & \quad \quad \quad  \left.  } +
            \sum_{r,s,t,u=1}^N \phi(r,s,t,u)\vec{z}_i^{(r)}\vec{y}_j^{(s)}\vec{y}_k^{(t)}\vec{y}_l^{(u)} \right] \\
         &= k(\vec{Y}_i, \vec{Y}_j, \vec{Y}_k, \vec{Y}_l)
            + k(\vec{Z}_i, \vec{Y}_j, \vec{Y}_k, \vec{Y}_l).
       \end{align*}
     \item  The multiplicative portion of multilinearity comes from:
       \begin{align*}
\jrnvers{         & k(c\vec{Y}_i, \vec{Y}_j, \vec{Y}_k, \vec{Y}_l) \\ }
\stdvers{k(c\vec{Y}_i, \vec{Y}_j, \vec{Y}_k, \vec{Y}_l)}
         &= \frac{1}{N}\sum_{r,s,t,u=1}^N \phi(r,s,t,u)c\vec{y}_i^{(r)}\vec{y}_j^{(s)}\vec{y}_k^{(t)}\vec{y}_l^{(u)} \\
         &= c\frac{1}{N}\sum_{r,s,t,u=1}^N \phi(r,s,t,u)\vec{y}_i^{(r)}\vec{y}_j^{(s)}\vec{y}_k^{(t)}\vec{y}_l^{(u)} \\
         &= k(\vec{Y}_i, \vec{Y}_j, \vec{Y}_k, \vec{Y}_l).
       \end{align*}
     \end{enumerate}
   \end{proof}

  These multilinearity properties imply that
  \begin{align*}
    & k(\vec{X}_i, \vec{X}_j, \vec{X}_k, \vec{X}_l) \\
      &= \sum_{qrst}k(A_{iq}(\vec{S} + A^{-1}\gvec{\eta})_q, A_{jr}(\vec{S} + A^{-1}\gvec{\eta})_r, \jrnvers{ \\
      &\quad \quad\quad\quad } A_{ks}(\vec{S} + A^{-1}\gvec{\eta})_s, A_{lt}(\vec{S} + A^{-1}\gvec{\eta})_t) \\
      &= \sum_{qrst}A_{iq}A_{jr}A_{ks}A_{lt}k(\vec{S}_q + (A^{-1}\gvec{\eta})_q, \jrnvers{ \\ 
      &\quad \quad\quad\quad }\vec{S}_r + (A^{-1}\gvec{\eta})_r, \vec{S}_s + (A^{-1}\gvec{\eta})_s, \vec{S}_t + (A^{-1}\gvec{\eta})_t).
  \end{align*}
  As such, Lemma \ref{lemma:COV_Tranformation} applies also to $k$-statistic estimates
  of random variables.  In particular, it is possible to think of the $k$-statistic tensor
  associated with the random variable $\vec{\vec{X}}$ as being
  generated from an unobserved $k$-statistic tensor from the latent samples of
  $\vec{S} + A^{-1}\gvec{\eta}$.   We can work directly with the random
  variable $\vec{S} + A^{-1}\gvec{\eta}$ for the purposes of error analysis.  This will be a natural approach
  since the difficulty of the problem relies partially on the \replacement{JV}{kurtosis}{fourth cumulant} of the latent
  distribution for $\vec{S} + A^{-1}\gvec{\eta}$.


  Let $\mu_k$ represent $\max_i\E[\vec{S}_i^k]$.  By assumption,
  $\mu_1 = 0$ and $\mu_2 = 1$.  Let $\gvec{\eta}^*$ denote
  $A^{-1} \gvec{\eta}$.  Let \[\sigma_{\gvec{\eta}^*} =
  \max_{\norm{u}=1}\left(\sqrt{\vec{u}^T\Sigma_{\gvec{\eta}^*}\vec{u}}\right)\]
  where $\Sigma_{\gvec{\eta}^*}$ is the covariance matrix of
  $\gvec{\eta}^*$.  The error induced by estimating the latent
  fourth cumulant tensor $k_{\vec{S}+\gvec{\eta}^*}$ from
  a sample can be bounded using the following 2 Lemmas:

  \begin{lemma} \label{lemma:k-stat-variance-bound}
    Let $\vec{Z} = \vec{S} + \gvec{\eta}^*$.  Then, \[\var(k(\vec{Z}_i, \vec{Z}_j,
    \vec{Z}_k, \vec{Z}_l)) = O\left(\frac{\max_{i\in [n]}\E[Z_i^8]}{N}\right).\]
  \end{lemma}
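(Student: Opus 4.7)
The plan is to expand $\var(k) = \E[k^2] - (\E[k])^2$ directly from the explicit definition of the $k$-statistic, observe that the index configurations in which the two copies of $k$ share no sample index cancel exactly between $\E[k^2]$ and $(\E[k])^2$, and then bound the residual from the ``overlapping'' configurations by a combinatorial count combined with a moment estimate using H\"older.

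First, writing $T(r,s,t,u) := \vec z^{(r)}_i \vec z^{(s)}_j \vec z^{(t)}_k \vec z^{(u)}_l$, one has
\[\E[k^2] = \frac{1}{N^2}\sum_{(r,s,t,u),(r',s',t',u')} \phi(r,s,t,u)\phi(r',s',t',u')\,\E[TT'],\]
and the analogous expression for $(\E[k])^2$ with $\E[T]\E[T']$ in place of $\E[TT']$. On the sub-sum where $\{r,s,t,u\}\cap\{r',s',t',u'\} = \emptyset$, the expectation factors by independence of the i.i.d.\ samples, and the two contributions match. What remains is
\[\var(k) = \frac{1}{N^2}\sum_{\mathrm{overlap}} \phi\,\phi'\bigl(\E[TT'] - \E[T]\E[T']\bigr),\]
where ``overlap'' means the two 4-tuples share at least one sample index.

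Next, I would control this residual by a counting argument. If $(r,s,t,u)$ uses exactly $q$ distinct sample indices, the explicit formula for $\phi$ gives $|\phi(r,s,t,u)| = O(N^{-(q-1)})$, while the number of such tuples is $O(N^q)$; analogously for the primed tuple with $q'$ distinct indices. Let $p$ denote the number of distinct sample indices in the union; the overlap assumption forces $p \le q + q' - 1$, so the number of configurations of a given shape is $O(N^p)$ and $|\phi\phi'| = O(N^{-(q+q'-2)})$. Their product is $O(N^{p-q-q'+2}) = O(N)$, and summing over the $O(1)$ possible partition shapes of the $8$ slots still yields $O(N)$. Each of $\E[TT']$ and $\E[T]\E[T']$ is a product of $\vec Z$-coordinate moments of total degree $8$, so by H\"older's inequality is bounded by $\max_i \E[\vec Z_i^8]$. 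Combining,
\[|\var(k)| \le \frac{1}{N^2}\cdot O(N)\cdot O\bigl(\max_i \E[\vec Z_i^8]\bigr) = O\bigl(\max_i \E[\vec Z_i^8]/N\bigr),\]
as claimed.

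The main obstacle is the combinatorial bookkeeping: enumerating the equality patterns on the $8$ sample-index slots consistent with a pair of partitions of $(r,s,t,u)$ and $(r',s',t',u')$, and verifying uniformly that $p \le q + q' - 1$ whenever there is at least one shared index. Once that is in hand, the remainder of the argument uses nothing beyond the i.i.d.\ assumption, the explicit magnitudes of $\phi$, and H\"older's inequality.
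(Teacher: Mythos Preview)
Your proof is correct and follows essentially the same approach as the paper: expand the variance, cancel the non-overlapping index pairs by independence of samples, bound the surviving moments by the eighth moment, and count the overlapping configurations using $|\phi| = O(N^{-(q-1)})$ against $O(N^q)$ tuples. The only cosmetic differences are that you organize the count by the joint partition structure of all eight slots (using $p \le q+q'-1$) whereas the paper fixes $\alpha$ first and bounds the fraction of $\beta$'s that intersect it, and you invoke H\"older where the paper iterates Cauchy--Schwarz.
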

    \begin{proof}
      In order to save space, it will be useful to use multi-index notation.
      In particular, taking $I = (i_1, i_2, i_3, i_4) \in [n]^4$ and $\alpha
      = (\alpha_1, \alpha_2, \alpha_3, \alpha_4) \in [N]^4$,
      $\phi_{\alpha}\vec{z}_I^{(\alpha)}$ will be denote
      $$\phi(\alpha_1, \alpha_2, \alpha_3, \alpha_4) \vec{z}_{i_1}^{(\alpha_1)}
      \vec{z}_{i_2}^{(\alpha_2)}\vec{z}_{i_3}^{(\alpha_3)}\vec{z}_{i_4}^{(\alpha_4)}$$
      Further, the set $\alpha \cap \beta$ will be defined as:
    \begin{equation*}
       \alpha \cap \beta = \{ \alpha_i : \alpha_i = \beta_j \text{ for some pair }
       (i, j) \}.
    \end{equation*}
      Keeping these notations in mind, we can proceed with the proof.  Let
      $I\in [n]^4$.
      \begin{align}
       &\var(k(\vec{Z}_{i_1}, \vec{Z}_{i_2}, \vec{Z}_{i_3}, \vec{Z}_{i_4})) \nonumber \\
       &= \E\left[ \left(\frac{1}{N}\sum_{\alpha \in [N]^4} \phi_{\alpha} \vec{z}_I^{(\alpha)}\right)^2 \right]
          - \E\left[\frac{1}{N}\sum_{\alpha \in [N]^4} \phi_{\alpha} \vec{z}_I^{(\alpha)} \right]^2 \nonumber \\
       &= \frac{1}{N^2}\sum_{\alpha \in [N]^4}\sum_{\beta \in [N]^4}
         \E[\phi_{\alpha} \vec{z}_I^{(\alpha)} \phi_{\beta} \vec{z}_I^{(\beta)} ] \nonumber \jrnvers{ \\
         & \quad \  }
         - \frac{1}{N^2} \sum_{\alpha \in [N]^4}\sum_{\beta \in [N]^4}
         \E[\phi_{\alpha} \vec{z}_I^{(\alpha)}] \E[\phi_{\beta} \vec{z}_I^{(\beta)} ] \nonumber \\
       &= \frac{1}{N^2}\sum_{\alpha \in [N]^4}
          \sum_{\substack{\beta \in [N]^4 \\ \alpha \cap \beta \neq \emptyset}}
         \E[\phi_{\alpha} \vec{z}_I^{(\alpha)} \phi_{\beta} \vec{z}_I^{(\beta)} ] \nonumber \jrnvers{ \\
         & \quad \  }
         - \frac{1}{N^2} \sum_{\alpha \in [N]^4}\sum_{\substack{\beta \in [N]^4 \\ \alpha \cap \beta \neq \emptyset}}
         \E[\phi_{\alpha} \vec{z}_I^{(\alpha)}] \E[\phi_{\beta} \vec{z}_I^{(\beta)} ] \nonumber \\
       &\leq \frac{1}{N^2}\sum_{\alpha \in [N]^4} \phi_{\alpha}
         \sum_{\substack{\beta \in [N]^4 \\ \alpha \cap \beta \neq \emptyset}} \phi_{\beta}
         \E[ \vec{z}_I^{(\alpha)}  \vec{z}_I^{(\beta)} ]. \label{eq:k-stat-variance-sum}
      \end{align}
      Equation \eqref{eq:k-stat-variance-sum} contains the essence of the argument.  However, in
      order to complete the argument, several facts need to be demonstrated.  First, it needs to
      be seen that $\abs{\E[\vec{z}_I^{(\alpha)}\vec{z}_I^{(\beta)}]} \leq \max_i(\E[\vec{Z}_i^8])$.
      To see this, use the Cauchy-Schwartz inequality on random variables $Y_1, Y_2$ to get:
      \begin{equation} \label{eq:expectationInequality}
        \E[Y_1 Y_2] \leq \max (\E[Y_1^2], \E[Y_2^2])
      \end{equation}
      Applying this fact recursively yields that
      $\abs{\E[\vec{z}_I^{(\alpha)}  \vec{z}_I^{(\beta)}]} \leq \max_i(\E[\vec{Z}_i^8])$.

      The second difficulty that arises is seeing how limiting oneself to samples in which
      $\alpha \cap \beta \neq \emptyset$ restricts the summation.  First, let $\dist(\beta)$ denote
      the number of distinct indices in $\beta$.  If $c = \dist(\beta)$, then there are $\nCr{N}{c}$
      choices of index values that can be used to generate $\beta$, of which $\nCr{N-4}{c}$ certainly
      do not intersect $\alpha$.  As such,
      \begin{equation*}
        \frac{\nCr{N}{c} - \nCr{N-4}{c}}{\nCr{N}{c}}
      \end{equation*}
      gives an upper bound on the fraction of index sets in which $\beta \cap \alpha \neq \emptyset$
      when $\dist(\beta) = c$.  Finally, noting that
      $\abs{\phi_{\beta}} \leq 7/(N^{\dist(\beta)-1})$ for sufficiently large $N$ and that
      $\sum_{\alpha\in[N]^4}\phi_{\alpha} = O(N)$, we have sufficient tools with which to proceed from \eqref{eq:k-stat-variance-sum}:
      \begin{align*}
\jrnvers{       &\var(k(\vec{Z}_{i_1}, \vec{Z}_{i_2}, \vec{Z}_{i_3}, \vec{Z}_{i_4})) \\ }
\stdvers{       \var(k(\vec{Z}_{i_1}, \vec{Z}_{i_2}, \vec{Z}_{i_3}, \vec{Z}_{i_4})) }
       &\leq \frac{1}{N^2}\abs{\sum_{\alpha \in [N]^4} \phi_{\alpha} \sum_{c = 1}^4
         \sum_{\substack{\dist(\beta) = c \\ \alpha \cap \beta \neq \emptyset}} \phi_{\beta}
         \E[ \vec{z}_I^{(\alpha)}  \vec{z}_I^{(\beta)} ]} \\
        	& \leq \frac{1}{N^2}\max_{i\in [n]}\E[\vec{Z}_i^8]
         \sum_{\alpha \in [N]^4} \abs{\phi_{\alpha}} \sum_{c = 1}^4
         \sum_{\substack{\dist(\beta) = c \\ \alpha \cap \beta \neq \emptyset}} \abs{\phi_{\beta}} \\
       &\leq \frac{1}{N^2}\max_{i \in [n]} \E[\vec{Z}_i^8]
         \sum_{\alpha \in [N]^4} \abs{\phi_{\alpha}} \sum_{c = 1}^4
         \frac{\nCr{N}{c} - \nCr{N-4}{c}}{\nCr{N}{c}} 7N^{-c+1} \sum_{\dist(\beta) = c} 1\\
       &= O\left( \frac{1}{N^2}\max_{i \in [n]} (\E[\vec{Z}_i^8]) N^{-1} N^{-c+1} N^c
         \sum_{\alpha \in [N]^4} \abs{\phi_{\alpha}} \right)  \\
       &= O\left(\frac{\max_{i \in [n]} (\E[\vec{Z}_i^8])}{N}\right).
      \end{align*}
    \end{proof}

  \begin{lemma} \label{lemma:k_statistic_Sampling_Bound}
    Given $\epsilon, \delta > 0$, the error of each term in the
    $k$-statistic tensor for $\vec S + A^{-1}\gvec{\eta}$ is at most
    $\epsilon$ with probability $1 - \delta$ using
    \begin{equation*}
      N = O\left( \frac{n^4}{\epsilon^2 \delta} \biggl(\mu_{8} + \frac{\sigma^8_{\gvec{\eta}}}{\sigma_{\min}(A)^8}\biggr)
               \right)
    \end{equation*}
    samples.
  \end{lemma}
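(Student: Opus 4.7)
The plan is to combine the single-entry variance bound from Lemma \ref{lemma:k-stat-variance-bound} with Chebyshev's inequality and a union bound over the $n^4$ entries of the tensor, and then to bound the required eighth moment $\max_{i \in [n]} \E[(\vec S + A^{-1}\gvec \eta)_i^8]$ in terms of $\mu_8$ and $\sigma_{\gvec \eta}^8/\sigma_{\min}(A)^8$.

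First I would fix an index tuple $I=(i_1,i_2,i_3,i_4)$ and apply Chebyshev's inequality. Since the $k$-statistic is an unbiased estimator of the cumulant, Lemma \ref{lemma:k-stat-variance-bound} yields
\[
\Pr\bigl[\, \abs{k(\vec Z_{i_1},\vec Z_{i_2},\vec Z_{i_3},\vec Z_{i_4}) - \cum(\vec Z_{i_1},\vec Z_{i_2},\vec Z_{i_3},\vec Z_{i_4})} > \epsilon\,\bigr] \;\leq\; \frac{1}{\epsilon^2} \cdot O\!\left(\frac{\max_i \E[\vec Z_i^8]}{N}\right),
\]
where $\vec Z = \vec S + A^{-1}\gvec\eta$. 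Taking a union bound over the (at most) $n^4$ distinct entries of the fourth-order tensor and requiring the failure probability to be at most $\delta$, it suffices to choose $N$ so that $n^4 \cdot \max_i \E[\vec Z_i^8]/(N\epsilon^2) = O(\delta)$, i.e.\ $N = \Omega(n^4 \max_i \E[\vec Z_i^8] / (\epsilon^2\delta))$.

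Next I would control $\max_i \E[\vec Z_i^8]$. Writing $\vec Z_i = \vec S_i + (A^{-1}\gvec\eta)_i$ and using the elementary inequality $(a+b)^8 \leq 2^7(a^8+b^8)$ together with independence of $\vec S$ and $\gvec\eta$, I get
\[
\E[\vec Z_i^8] \;\leq\; 2^7\bigl(\E[\vec S_i^8] + \E[(A^{-1}\gvec\eta)_i^8]\bigr) \;\leq\; 2^7\bigl(\mu_8 + \E[(A^{-1}\gvec\eta)_i^8]\bigr).
\]
Since $\gvec\eta$ is Gaussian with covariance $\Sigma_{\gvec\eta}$, the coordinate $(A^{-1}\gvec\eta)_i$ is a one-dimensional Gaussian with variance $\vec e_i^T A^{-1}\Sigma_{\gvec\eta} A^{-T}\vec e_i \leq \|A^{-1}\|_2^2 \, \sigma_{\gvec\eta}^2 = \sigma_{\gvec\eta}^2/\sigma_{\min}(A)^2$. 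The known formula $\E[\mathcal N(0,\sigma^2)^8]=105\,\sigma^8$ then gives $\E[(A^{-1}\gvec\eta)_i^8] = O(\sigma_{\gvec\eta}^8/\sigma_{\min}(A)^8)$. Plugging this into the Chebyshev bound reproduces the claimed sample complexity.

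The only step that requires any care is the last one: checking that the dependence on the Gaussian noise ends up as $\sigma_{\gvec\eta}^8/\sigma_{\min}(A)^8$ rather than some messier norm of $A^{-1}\Sigma_{\gvec\eta}A^{-T}$. This is straightforward via $\|A^{-1}\|_2 = 1/\sigma_{\min}(A)$ and the definition of $\sigma_{\gvec\eta}$ as the operator norm square root of $\Sigma_{\gvec\eta}$, but it is the one place where the problem-specific structure enters; everything else is a routine Chebyshev-plus-union-bound argument on top of Lemma \ref{lemma:k-stat-variance-bound}.
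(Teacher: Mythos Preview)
Your proposal is correct and follows the same strategy as the paper: apply Lemma~\ref{lemma:k-stat-variance-bound}, use Chebyshev plus a union bound over the $n^4$ tensor entries, and then bound $\max_i \E[\vec Z_i^8]$ in terms of $\mu_8$ and $\sigma_{\gvec\eta}^8/\sigma_{\min}(A)^8$. The only difference is cosmetic: the paper controls $\E[(\vec S_i + \gvec\eta^*_i)^8]$ by expanding the binomial and bounding each cross term via Cauchy--Schwarz, whereas your one-line convexity bound $(a+b)^8 \le 2^7(a^8+b^8)$ reaches the same conclusion more directly.
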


   \begin{proof}
     Define $\vec{Z} = \vec{S} + \gvec{\eta}^*$. Then using Lemma
     \ref{lemma:k-stat-variance-bound}\stdvers{}, $\var(k(\vec{Z}_i, \vec{Z}_j, \vec{Z}_k, \vec{Z}_l)) = O(\frac{1}{N}\max_{q \in [n]}\E[\vec{Z}_q^8])$.
     Using the binomial expansion,
      \begin{align*}
     \E[\vec{Z}_q^8]
       &= \sum_{m = 0}^8 \nCr{8}{m} \E[\vec{S}_q^m(\gvec{\eta}^*_q)^{8-m}] \\
       &= \sum_{m = 0}^4 \nCr{8}{2m} \E[\vec{S}_q^{2m}(\gvec{\eta}^*_q)^{8-2m}],
      \end{align*}
    since odd 0-mean Gaussian moments are 0.  Using equation \eqref{eq:expectationInequality}, we see that the dominant terms are
    $\mu_8$ and $(\sigma_{\eta}/\sigma_{\min}(A))^8$.  In particular, the cross terms come from
    $\E[\vec{S}_q^{2m}\gvec{\eta}^*_{8-2m}]$.  When $m = 2$, from \eqref{eq:expectationInequality},
    it follows that 
    \[\E[\vec{S}_q^4(\gvec{\eta}^*_q)^{4}] \leq \max( \mu_8, \E[(\gvec{\eta}^*_q)^{8}] ).\]
    When $m=1$, then 
    \begin{align*}
    \E[\vec{S}_q^2(\gvec{\eta}^*_q)^{6}] 
    &= \E[(\vec{S}_q^2(\gvec{\eta}^*_q)^{2})(\gvec{\eta}^*_q)^{4}] \\
    &\leq \max(\E[\vec{S}_q^4(\gvec{\eta}^*_q)^{4})], \E[(\gvec{\eta}^*_q)^{8})],     
    \end{align*}
for which
    $\E[\vec{S}_q^4(\gvec{\eta}^*_q)^{4})]\leq \max( \mu_8, \E[(\gvec{\eta}^*_q)^{8}] )$ has just
    been shown.  The case $m=3$ can be argued similarly to $m=1$ interchanging the roles of
    $\vec{S}$ and $\gvec{\eta}^*$.  Thus, one gets:
    \begin{equation*}
     \E[\vec{Z}_q^8]
        = O(\mu_8 + \E[(\gvec{\eta}^*_q)^{8}]). 
    \end{equation*}
    For even Gaussian moments, the following equation holds 
    (see for instance \cite{Kendall94} section 3.4):
    \begin{equation*}
      \E[\sigma_{\gvec{\eta}^*}^{2k}] = \frac{(2k)!}{k!2^k} \sigma_{\gvec{\eta}^*}^{2k}.
    \end{equation*}
    It follows that
    \begin{align*}
     \E[\vec{Z}_q^8]
        &= O(\mu_8 + \sigma_{\gvec{\eta}^*}^8) \\
        &= O(\mu_8 + \sigma_{\gvec{\eta}}^8/\sigma_{\min}(A)^8).
    \end{align*}
    Chebyshev's inequality states that for a random variable $Y$,
    $\Pr(\abs{Y - \mu_Y} \geq c\sigma_Y) \leq \frac{1}{c^2}$.  Taking $Y$ to be
    $k(\vec{S}_i + \gvec{\eta}^*_i, \vec{S}_j + \gvec{\eta}^*_j,
     \vec{S}_k + \gvec{\eta}^*_j, \vec{S}_l + \gvec{\eta}^*_l)$, then since
    the $k$-statistic is unbiased, it follows that its expectation is
    $\cum(\vec{S}_i + \gvec{\eta}^*_i, \vec{S}_j + \gvec{\eta}^*_j,
     \vec{S}_k + \gvec{\eta}^*_j, \vec{S}_l + \gvec{\eta}^*_l) =
    \cum(\vec{S}_i, \vec{S}_j, \vec{S}_k, \vec{S}_l)$.  $c$ can be chosen such
    that $\delta/n^4 \geq 1/c^2$.  Then, in order to bound the error beneath $\epsilon$,
    it suffices to satisfy:
    \begin{equation*}
       \epsilon \geq c \sqrt{\var(k(\vec{S}_i + \gvec{\eta}^*_i, \vec{S}_j + \gvec{\eta}^*_j,
     \vec{S}_k + \gvec{\eta}^*_j, \vec{S}_l + \gvec{\eta}^*_l))},
    \end{equation*}
    which can be guaranteed by choosing $N$ such that
    $\epsilon \geq c O((\frac{1}{N}\max_{q\in [n]}(\E[\vec{Z}_i^8]))^{1/2})$.
    This leads to the expression:
    \begin{align*}
      c O\left(\sqrt{\frac{1}{N}\max_{q\in [n]}(\E[\vec{Z}_i^8])}\right)
              &\leq \epsilon \\
      O\left(\sqrt{\frac{\mu_8 + (\sigma^8_{\gvec{\eta}}/\sigma_{\min}(A)^8)}{N}
       }\right)\sqrt{\frac{n^4}{\delta}}
       & \leq \epsilon
    \end{align*}
    \begin{equation*}
      N \geq O\left(n^4 \frac{\mu_8 + (\sigma^8_{\gvec{\eta}}/\sigma_{\min}(A)^8)}{\epsilon^2 \delta}
       \right).
    \end{equation*}
    Applying the union bound, this number of samples is sufficient to guarantee with probability
    $1 - \delta$ that all terms in the $k$-statistic tensor for $\vec{S} + A^{-1}\gvec{\eta}$ can
    be bounded beneath $\epsilon$.
   \end{proof}

\section{Error Propagation for Quasi-\jrnvers{\\}Whitening}

\smallskip
What follows is an analysis for how error propagates throughout the
quasi-whitening algorithm.  It will be demonstrated that
the canonical vectors which act as a basis for the independent components of
$\vec{S}$ will remain approximately orthogonal after
quasi-whitening given sufficiently many samples.  It will be
demonstrated that the required number of samples is polynomial in
terms of $1/\epsilon$, $1/\delta$, $n$,
$\kappa(A)$, $\kmax/\kmin$, $1/\kmin$,
$\sigma_{\gvec{\eta}}/\sigma_{\min}(A)$, and $\mu_8$ where $\epsilon$ is the allowable
cosine error from orthogonality of the basis vectors, and $1 - \delta$ is the
probability of success.  Since it was
demonstrated in the previous section that, given any $\epsilon > 0$, the sample estimate of the
cumulant tensor can have error bounded by $\epsilon$ in each term, it
suffices to demonstrate that at each step of the algorithm, error does
not grow too fast.  The probability of success is unchanged since
only one sample is taken.  As a notation, hatted variables shall be
used to denote approximations of non-hatted variables.  It is assumed that
the $k$-statistic tensor $\hat{Q}_{\vec{S}}$ estimate of ${Q}_{\vec{S}}$ is defined from samples of the noisy latent variable
$\vec{S} + \gvec{\eta}^* = A^{-1}\vec{X}$, though for simplicity,  $\gvec{\eta}^*$ is suppressed from the subscript notation. (See also the discussion after Lemma \ref{lem:kstat}.)  
\insertion{JV}{Similarly, $\hat Q_{\vec X}$ comes from the $k$-statistic $k_{\vec X + \gvec \eta}$.}  \insertion{JV}{$\norm{\cdot}_F$ will denote the Frobenius norm. $\norm{\cdot}_{\max}$ will denote the max norm, i.e.:}
\begin{equation*}
    \norm{Q}_{\max} := \max_{i, j, k, l} \abs{Q_{ijkl}}
\end{equation*}
\insertion{JV}{for a fourth order tensor $Q$.}

\begin{lemma} \label{lemma:diagMatrixError}
  Given a sample of $\vec X$, let $\hat{Q}_{\vec{X}}$ and $\hat{Q}_{\vec{S}}$ be the associated $k$-statistic estimates for $Q_{\vec{X}}$ and $Q_{\vec{S}}$ respectively,
  and let $\hat{M}$ be an estimate for the matrix $M$ such that for some $\epsilon_1, \epsilon_2 > 0$,
  $\norm{\hat{Q}_{\vec{S}} - Q_{\vec{S}}}_{\max} \leq \epsilon_1$ and $\norm{\hat{M} - M}_2 \leq \epsilon_2$.
  There exists a matrix $Y$ such that $\hat{Q}_{\vec{X}} \circ \hat{M} = AYA^T$ and $Q_{\vec{X}} \circ M = ADA^T$ where $D$ is
  the diagonal matrix defined in Lemma \ref{lemma:cumulantDiagonalization}, 
  and the error in the estimate $Y$ is bounded as:
  \begin{align*}
    \norm{Y - D}_2 &\leq \norm{Y - D}_F  \\ 
    &\leq n^2\norm{A}_2^2 \norm{M}_F \epsilon_1 + \sqrt{n} \epsilon_2 \norm{A}_2^2(n^2\epsilon_1 + \kmax)
  \end{align*}
\end{lemma}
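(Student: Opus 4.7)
The plan is to use the transformation identity from Lemma~\ref{lemma:COV_Tranformation}, which (as the paper notes just before its statement) extends to the $k$-statistic tensor through the multilinearity established in Lemma~\ref{lem:kstat}. Applied to $\hat Q_{\vec X}$, this yields
\begin{equation*}
\hat Q_{\vec X} \circ \hat M \;=\; A\bigl(\hat Q_{\vec S} \circ (A^T \hat M A)\bigr) A^T,
\end{equation*}
so the matrix $Y := \hat Q_{\vec S} \circ (A^T \hat M A)$ furnishes the required factorization. In the noise-free regime, Lemma~\ref{lemma:cumulantDiagonalization} identifies $D = Q_{\vec S} \circ (A^T M A)$, which is diagonal because independence of the components of $\vec S$ forces $Q_{\vec S}$ to be supported entirely on entries $(Q_{\vec S})_{qqqq} = \kappa_4(\vec S_q)$.

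For the error analysis, I would use bilinearity of $\circ$ in both arguments to split
\begin{equation*}
Y - D \;=\; \underbrace{(\hat Q_{\vec S} - Q_{\vec S}) \circ (A^T \hat M A)}_{(\mathrm{I})} \;+\; \underbrace{Q_{\vec S} \circ (A^T(\hat M - M) A)}_{(\mathrm{II})},
\end{equation*}
bound each summand in Frobenius norm, and then appeal to $\norm{Y-D}_2 \le \norm{Y-D}_F$. This particular grouping---pairing $Q_{\vec S}$ with the $\hat M - M$ error and $\hat Q_{\vec S} - Q_{\vec S}$ with the full matrix $\hat M$---is what causes $\kmax$ rather than $\kmax + \epsilon_1$ to appear as the coefficient of $\epsilon_2$ in the final bound.

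For term (II), the support structure of $Q_{\vec S}$ makes $Q_{\vec S} \circ N$ diagonal with entries $\kappa_4(\vec S_q) N_{qq}$, so its Frobenius norm is at most $\kmax \norm{N}_F$. Taking $N = A^T(\hat M - M) A$ and applying $\norm{A^T C A}_F \le \sqrt n \,\norm{A}_2^2 \norm{C}_2$ yields $\mathrm{(II)} \le \sqrt n \,\kmax\, \norm{A}_2^2 \epsilon_2$. For (I), each entry of $(\hat Q_{\vec S} - Q_{\vec S}) \circ N$ is at most $\epsilon_1 \sum_{k,l} |N_{lk}| \le n \epsilon_1 \norm{N}_F$ by the max-norm hypothesis combined with Cauchy--Schwarz, so the whole matrix has Frobenius norm at most $n^2 \epsilon_1 \norm{N}_F$. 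Substituting $N = A^T \hat M A$ and using the triangle-inequality estimate $\norm{\hat M}_F \le \norm{M}_F + \sqrt n \epsilon_2$ gives $\mathrm{(I)} \le n^2 \epsilon_1 \norm{A}_2^2 \norm{M}_F + n^{5/2} \epsilon_1 \epsilon_2 \norm{A}_2^2$. Adding (I) and (II) and collecting the two terms proportional to $\sqrt n \,\norm{A}_2^2 \epsilon_2$ produces exactly the claimed bound.

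The main delicacy is bookkeeping rather than conceptual depth: one must apply the correct norm conversions ($\norm{\cdot}_F \le \sqrt n \norm{\cdot}_2$ for $n\times n$ matrices and $\sum_{ij}|N_{ij}| \le n \norm{N}_F$ by Cauchy--Schwarz) and choose the split above so that $\norm{M}_F$ (not $\norm{\hat M}_F$) emerges as the coefficient of $\epsilon_1$, while $\kmax$ alone (not $\kmax + \epsilon_1$) emerges as the coefficient of $\epsilon_2$.
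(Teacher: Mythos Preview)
Your proposal is correct and follows essentially the same route as the paper: the same add-and-subtract split pairing $\hat Q_{\vec S}-Q_{\vec S}$ with $A^T\hat M A$ and $Q_{\vec S}$ with $A^T(\hat M - M)A$, followed by the same norm conversions and the triangle-inequality estimate $\norm{\hat M}_F \le \norm{M}_F + \sqrt n\,\epsilon_2$. The only cosmetic difference is that the paper phrases the bounds on terms (I) and (II) via the operator $2$-norm of the flattened $n^2\times n^2$ tensor (so $\norm{\hat Q_{\vec S}-Q_{\vec S}}_2 \le n^2\epsilon_1$ and $\norm{Q_{\vec S}}_2 = \kmax$), whereas you obtain the identical constants by a direct entrywise Cauchy--Schwarz argument.
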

  \begin{proof}
     Using Lemma \ref{lemma:COV_Tranformation}, one gets
     $\hat{Q}_{\vec{X}} \circ \hat{M} = A (\hat{Q}_{\vec{S}} \circ (A^T \hat{M} A)) A^T$,
     which gives that $Y$ is well defined, and $Y = (\hat{Q}_{\vec{S}} \circ (A^T \hat{M} A))$.
     By similar reasoning, $D = Q_{\vec{S}} \circ (A^T M A)$.
     In the following investigation of error propagation,
     the tensors $\hat{Q}_\vec{S}$ and $Q_{\vec{S}}$ will be treated as matrices as described in section
     \ref{section:Quasi_Whitening}, and the 2-norm used on the tensors should be interpreted as
     if the tensor has been flattened to its $n^2 \times n^2$ matrix form.  Then:
     \begin{align*}
\jrnvers{       & \norm{Y - D}_F \\ }
\stdvers{\norm{Y - D}_F}
         &= \norm{ \hat{Q}_{\vec{S}} \circ (A^T \hat{M} A) - Q_{\vec{S}} \circ (A^T M A) }_F \\
         &= \norm{ \hat{Q}_{\vec{S}} \circ (A^T \hat{M} A) - Q_{\vec{S}} \circ (A^T \hat{M} A) \jrnvers{ \\
         & \quad \ \ } + Q_{\vec{S}} \circ (A^T \hat{M} A) - Q_{\vec{S}} \circ (A^T M A) }_F \\
         &\leq \norm{ \hat{Q}_{\vec{S}} - Q_{\vec{S}}}_2 \norm{(A^T \hat{M} A)}_F
                 + \norm{Q_{\vec{S}}}_2 \norm{A^T (\hat{M}-M) A}_F \\
         &\leq n^2\epsilon_1 \norm{A}_2^2\norm{\hat{M} - M + M}_F + \kmax \norm{A}_2^2 \sqrt{n} \epsilon_2 \\
         &\leq n^2 \norm{A}_2^2\epsilon_1(\norm{\hat{M} - M}_F + \norm{M}_F) + \sqrt{n} \kmax \norm{A}_2^2 \epsilon_2 \\
         &\leq n^2 \norm{A}_2^2\epsilon_1(\sqrt{n} \epsilon_2 + \norm{M}_F) + \sqrt{n}\kmax\norm{A}_2^2 \epsilon_2 \\
         & = n^2\norm{A}_2^2 \norm{M}_F \epsilon_1 + \sqrt{n} \epsilon_2 \norm{A}_2^2(n^2\epsilon_1 + \kmax).
     \end{align*}
     This is also a bound for $\norm{Y-D}_2$ based on the standard inequality $\norm{Y-D}_2 \leq \norm{Y-D}_F$.
  \end{proof}

Lemma \ref{lemma:diagMatrixError} above bounds the error growth from tensor operations
while placing all error on the diagonal matrix.
The next goal is to demonstrate that taking the inverse of a matrix has
reasonable error propagation properties.  The following Lemma (a portion of Theorem 2.5 from \cite{Stewart1990}) will be useful:
\begin{lemma} \label{lemma:Matrix_Inversion_Error_General}
  Let $\norm{\cdot}$ be any consistent matrix norm.  Given a matrix $C$ and a matrix perturbation $E$ such that $\norm{C^{-1} E} < 1$, and given $\tilde{C} = C+E$, then
  \begin{equation*}
    \frac{\norm{\tilde{C}^{-1} - C^{-1}}}{\norm{C^{-1}}}
     \leq \frac{\norm{C^{-1}E}}{1 - \norm{C^{-1}E}} \ .
  \end{equation*}
\end{lemma}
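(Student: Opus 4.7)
The plan is to reduce the statement to the Neumann series (Banach lemma) applied to $I + C^{-1}E$, and then express $\tilde C^{-1} - C^{-1}$ in a form that exposes a single factor of $C^{-1}E$ times $C^{-1}$.

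First, I would write $\tilde C = C + E = C(I + C^{-1}E)$. Since $\norm{C^{-1}E} < 1$ and the norm is consistent, the Neumann series $\sum_{k=0}^{\infty} (-C^{-1}E)^k$ converges and gives an inverse for $I + C^{-1}E$, together with the standard bound
\[
\norm{(I + C^{-1}E)^{-1}} \;\leq\; \sum_{k=0}^{\infty} \norm{C^{-1}E}^k \;=\; \frac{1}{1 - \norm{C^{-1}E}}.
\]
In particular $\tilde C$ is invertible, with $\tilde C^{-1} = (I + C^{-1}E)^{-1} C^{-1}$, so the statement of the lemma is at least meaningful.

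Next, I would manipulate $\tilde C^{-1} - C^{-1}$ to isolate one copy of $C^{-1}E$. Using the identity
\[
(I + C^{-1}E)^{-1} - I \;=\; (I + C^{-1}E)^{-1}\bigl[I - (I + C^{-1}E)\bigr] \;=\; -(I + C^{-1}E)^{-1} C^{-1} E,
\]
I obtain
\[
\tilde C^{-1} - C^{-1} \;=\; \bigl[(I + C^{-1}E)^{-1} - I\bigr] C^{-1} \;=\; -(I + C^{-1}E)^{-1} (C^{-1}E)\, C^{-1}.
\]
Taking norms and applying consistency together with the Neumann bound above gives
\[
\norm{\tilde C^{-1} - C^{-1}} \;\leq\; \frac{\norm{C^{-1}E}}{1 - \norm{C^{-1}E}} \,\norm{C^{-1}},
\]
and dividing by $\norm{C^{-1}}$ yields the claim.

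There is no real obstacle here; the only subtle point is ensuring the matrix norm is consistent (submultiplicative) so that $\norm{(C^{-1}E)^k} \leq \norm{C^{-1}E}^k$, which is exactly the hypothesis on the norm. Hence the proof is essentially a direct computation, and I would present it as the three-line chain above preceded by a sentence recalling the Neumann series bound.
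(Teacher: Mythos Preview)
Your argument is correct: factoring $\tilde C = C(I + C^{-1}E)$, invoking the Neumann series bound $\norm{(I+C^{-1}E)^{-1}} \leq 1/(1-\norm{C^{-1}E})$, and then writing $\tilde C^{-1} - C^{-1} = -(I+C^{-1}E)^{-1}(C^{-1}E)\,C^{-1}$ is exactly the standard derivation, and each step uses only the assumed submultiplicativity of the norm.

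The paper itself does not supply a proof of this lemma; it simply quotes it as a portion of Theorem~2.5 of Stewart and Sun's \emph{Matrix Perturbation Theory}. Your write-up is the classical proof one finds in that reference, so there is nothing to compare beyond noting that you have filled in what the paper deliberately outsourced.
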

From this Lemma, it follows immediately that if $\norm{E}_2 \leq 1/(2\norm{C^{-1}}_2)$, then
\begin{equation} \label{eq:Matrix_Inversion_Error_General}
	\norm{\tilde{C}^{-1} - C^{-1}}_2 \leq 2 \norm{C^{-1}}^2_2 \norm{E}_2 \ .
\end{equation}



The main result of this paper is contained in Theorem \ref{thm:mainthm}, which we prove now.  
  \begin{proof}[{\proofword} of Theorem \ref{thm:mainthm}]
    The proof is split into 3 parts.  In the first part, the preceding
    Lemmas are used to propagate error from the estimated latent tensor $Q_{\vec{S}}$.
    Then, a bound on the number of samples required to bound within $\epsilon$ the cosine and 
    scaling errors for the basis for the independent subspace from equations \eqref{eq:mainthm:cosine} and \eqref{eq:mainthm:scaling}
    is stated.  Finally, it is demonstrated that the bound on angular error is correct.

    Let $N$ be a sample size to be chosen later as a function of an arbitrary parameter $\eta>0$, so that with probability $1-\delta$ we have $\norm{\hat{Q}_{\vec{S}} - Q_{\vec{S}}}_{\max} < \eta$.
    Then, let
    $D' = \diag(\kappa_4(\vec{S}_1) \norm{A_1}^2,\allowbreak  \dotsc,\allowbreak \kappa_4(\vec{S}_n) \norm{A_n}^2)$
    be the same as in the proof
    of Theorem \ref{thm:TensorQuasi-Whitening}.  By Lemma \ref{lemma:cumulantDiagonalization}, $AD'A^T = Q_{\vec{X}} \circ I$.
    Let $Y'$ be the estimate of $D'$ generated as $AY'A^T = \hat{Q}_{\vec{X}} \circ I$.
    Then by Lemma \ref{lemma:diagMatrixError}, it follows that
    $\norm{Y' - D'}_2 < n^{5/2} \norm{A}_2^2 \eta$.  In order to apply equation
    \eqref{eq:Matrix_Inversion_Error_General}, it is useful to get error bounds for 
    $\norm{D'^{-1}}_2$.  It can be shown that:
	\begin{equation} \label{eq:normD'_inv_bounds}
	\frac{1}{\kmax \sigma_{\max}(A)^2} \leq \norm{D'^{-1}}_2 \leq \frac{1}{\kmin \sigma_{\min}(A)^2}.
	\end{equation}
    Then, it follows that using equation \eqref{eq:Matrix_Inversion_Error_General}:
    \begin{align}
       \norm{Y'^{-1} - D'^{-1}}_2 
       &\leq 2 \norm{D'^{-1}}_2^2 \norm{Y' - D'}_2 \nonumber \\
       &\leq \frac{2n^{5/2}\norm{A}_2^2\eta}{\kmin^2\sigma_{\min}(A)^4} \nonumber \\
       &= \frac{2n^{5/2}\kappa(A)^2\eta}{\kmin^2\sigma_{\min}(A)^2} 
       \label{eq:mainthm:firstInversionError}
    \end{align}
    with the restriction that $\eta$ must be chosen such that
    $\norm{Y' - D'} = n^{5/2} \norm{A}_2^2 \eta \leq 1/(2\norm{D'^{-1}}_2)$.  
    This can be ensured by requiring that $\eta \leq \kmin / (2n^{5/2} \kappa(A)^2)$.

    Now, let $Y$ and $D$ be defined such that $ADA^T = Q_{\vec{X}} \circ (AD'A^T)^{-1}$ and
    $AYA^T = \hat{Q}_{\vec{X}} \circ (AY'A^T)^{-1}$.  By Lemma \ref{lemma:diagMatrixError},
    \begin{align*}
\jrnvers{      & \norm{Y-D}_2  \notag \\ }
\stdvers{ \norm{Y-D}_2 }
      & \leq n^2 \norm{A}_2^2 \norm{(AD'A^T)^{-1}}_F \eta \notag \\
      &  \quad  + \sqrt{n} \norm{(AY'A^T)^{-1} - (AD'A^T)^{-1}}_2 \norm{A}_2^2(n^2\eta + \kmax) \\
      & \leq n^2 \kappa(A)^2\norm{D'^{-1}}_F\eta \\&\quad+ \sqrt{n}\kappa(A)^2\norm{Y'^{-1} - D'^{-1}}_2(n^2\eta + \kmax) \\
      & \leq  \frac{n^{5/2}\kappa(A)^2}{\kmin \sigma_{\min}(A)^2}\eta
              + \frac{2n^3\kappa(A)^4\kmax}{\kmin^2 \sigma_{\min}(A)^2}\eta 
              + \frac{2n^{5}\kappa(A)^4}{\kmin^2\sigma_{\min}(A)^2}\eta^2
    \end{align*} 
    which follows by applying \eqref{eq:normD'_inv_bounds} and \eqref{eq:mainthm:firstInversionError}.  
    
    Since $\eta \leq \kmin / (2n^{5/2} \kappa(A)^2)$,
    \begin{equation*}
      \norm{Y-D}_2 = O\left(\frac{n^3\kappa(A)^4\kmax}{ \sigma_{\min}(A)^2 \kmin^2}\eta\right).
    \end{equation*}

    Once again, it will be necessary to bound $\norm{D}_2$ in order to apply equation \eqref{eq:Matrix_Inversion_Error_General}.  Using $D = \diag(1/\norm{A_1}_2^2, \dotsc, 1/\norm{A_n}_2^2)$ from the proof of Theorem
    \ref{thm:TensorQuasi-Whitening}, it follows that:
    \begin{equation*}
      \sigma_{\min}(A)^2 \leq \norm{D^{-1}}_2 \leq \sigma_{\max}(A)^2.
    \end{equation*}

    Applying equation \eqref{eq:Matrix_Inversion_Error_General} yields:
    \begin{align*} 
      \norm{Y^{-1} - D^{-1}}_2 
      &\leq 2\norm{D^{-1}}_2^2 \norm{Y-D}_2 \\
      &\leq 2\sigma_{\max}(A)^4 O\left(\frac{n^3 \kappa(A)^4 \kmax}{\sigma_{\min}(A)^2 \kmin^2} \eta \right) \\
      \frac{\norm{Y^{-1} - D^{-1}}_2}{\sigma_{\min}(A)^2} 
      &\leq O\left(\frac{n^3 \kappa(A)^8 \kmax}{\kmin^2} \eta \right)
    \end{align*}
    with the restriction that $\norm{Y-D}_2 \leq 1/(2\norm{D^{-1}}_2)$.  Noting
    that $\norm{Y-D}_2 = O\left(\frac{n^3\kappa(A)^4\kmax }{\sigma_{\min}(A)^2\kmin^2}\eta\right)$ and $1/\norm{D^{-1}}_2 \geq 1/\sigma_{\max}(A)^2$, it suffices
    to restrict $\eta \leq O\left( \frac{\kmin^2}{n^3 \kappa(A)^6 \kmax}\right)$.
%

    Since $\eta$ is arbitrary (except for upper bound restrictions), 
    $\eta$ can be chosen such that
    $\frac{\norm{Y^{-1}-D^{-1}}_2}{\sigma_{\min}(A)^2} < \frac{\epsilon}{2}$.
    This can be accomplished taking $\eta = O\left(\frac{\kmin^2}{n^3 \kappa(A)^8 \kmax}\epsilon \right)$.  
    This choice is valid, as both restrictions on $\eta$ are met when $\epsilon \leq 1$.  By
    Lemma \ref{lemma:k_statistic_Sampling_Bound}, taking
    \begin{align*}
      N &= O\left(\frac{n^4}{\eta^2 \delta}(\mu_8 + (\sigma^8_{\gvec{\eta}}/\sigma_{\min}(A)^8))\right) \\
        &= O\left(n^{10} \frac{(\kappa(A)^{16}\kmax^2)}{\epsilon^2\delta \kmin^4}(\mu_8 + (\sigma^8_{\gvec{\eta}}/\sigma_{\min}(A)^8))\right)
    \end{align*}
    samples suffice to obtain the desired error bound $\epsilon$ with probability $1-\delta$.

    The basis in which $\vec S$ has independent coordinates is the canonical basis. Therefore, the ultimate goal is to show that, with our choice of an approximate quasi-whitening matrix $\hat B^{-1}$ below, the canonical vectors stay approximately orthogonal after applying $\hat B^{-1} A$.
    To see this, factorize
    $\hat{B}\hat{B}^T = \hat{Q}_{\vec{X}} \circ (\hat{Q}_{\vec{X}} \circ I)^{-1}$.
    $\hat{B}^{-1}$ is the approximate quasi-whitening matrix, and $\hat{B}\hat{B}^T = AYA^T$
    gives that $\hat{B}^{-1}AY^{1/2} = R$ for some orthogonal matrix $R$,
    and $\hat{B}^{-1}A = RY^{-1/2}$.  Since $Y$ is
    symmetric, $Y^{-1/2}$ can be taken to be a symmetric matrix.  Take
    $\vec{e}_i, \vec{e}_j$ to be canonical vectors.  Define $\delta_{ij}$ to be the delta function such that 
    \[ 
    \delta_{ij} =  \begin{cases}
            1 & \text{if $i = j$}, \\
            0 & \text{otherwise}.
    \end{cases}
    \]
      Then with probability $1 - \delta$,
    \begin{align*}
     \jrnvers{ & } \frac{\langle \hat{B}^{-1}A \vec{e}_i, \hat{B}^{-1}A \vec{e}_j\rangle}{\norm{A_i}_2\norm{A_j}_2} \jrnvers{ \\ }
      & =\frac{\vec{e}_i^T A^T \hat{B}^{-T}\hat{B}^{-1}A\vec{e}_j}{\norm{A_i}_2\norm{A_j}_2} \\
      & =\frac{\vec{e}_i^T Y^{-1} \vec{e}_j}{\norm{A_i}_2\norm{A_j}_2} \\
      & \in \left(\frac{D^{-1}_{ij}}{\norm{A_i}_2 \norm{A_j}_2} - \frac{\epsilon}{2} ,
                   \frac{D^{-1}_{ij}}{\norm{A_i}_2 \norm{A_j}_2} + \frac{\epsilon}{2} \right) \\
      & \supset \left(\frac{\delta_{ij} \norm{A_i}\norm{A_j}}{\norm{A_i}\norm{A_j}} - \frac{\epsilon}{2} ,
              \frac{\delta_{ij} \norm{A_i}\norm{A_j}}{\norm{A_i}\norm{A_j}} + \frac{\epsilon}{2} \right) \\
      & = \delta_{ij} \pm \frac{\epsilon}{2}.
    \end{align*}
    Consider the case where $i = j$.  Then,
    \begin{equation*}
    	\norms{\hat{B}^{-1}A\vec{e}_i}_2 \in 
    	\left(1 \pm \frac{\epsilon}{2}\right) \norms{A_i}_2,
    \end{equation*}
    which gives equation \eqref{eq:mainthm:scaling}
    Consider the case where $i \neq j$.  Then,
    \begin{equation*}
    \frac{\langle \hat{B}^{-1}A \vec{e}_i, \hat{B}^{-1}A \vec{e}_j\rangle}{\norm{A_i}_2\norm{A_j}_2} \cdot 
    \frac{\norm{\hat{B}^{-1}A \vec{e}_i}_2\norm{\hat{B}^{-1}A \vec{e}_j}_2}{\norm{\hat{B}^{-1}A \vec{e}_i}_2\norm{\hat{B}^{-1}A \vec{e}_j}_2}
    \in \pm \frac{\epsilon}{2} \\
    \end{equation*}
    \begin{align*}
    \frac{\langle \hat{B}^{-1}A \vec{e}_i, \hat{B}^{-1}A \vec{e}_j\rangle}
    {\norm{\hat{B}^{-1}A \vec{e}_i}_2\norm{\hat{B}^{-1}A \vec{e}_j}_2}
    &\in \pm \frac{\epsilon}{2} \cdot \frac{1}{1 \pm \epsilon/2} \\
    &\subset \pm \epsilon
    \end{align*}
    by restricting $\epsilon < \frac{1}{2}$.  This gives equation \eqref{eq:mainthm:cosine},
    completing the proof.
  \end{proof}

\section{Acknowledgments}
We would like to thank Navin Goyal for useful discussions.
\insertion{MB}{Mikhail Belkin and James Voss were partially supported
  by NSF Grants IIS 0643916 and IIS 1117707 during the writing of the
  paper.}

\bibliographystyle{abbrv}    

\balancecolumns
\bibliography{gaussianRobustICA}

\end{document}